\newtheorem*{rep@theorem}{\rep@title}
\newcommand{\Bo}[1]{}
\newcommand{\yingyu}[1]{}
\newcommand{\eigg}[1]{\lambda_{#1}(G)} 
\newcommand{\eiggn}[1]{\lambda_{#1}(G_n)} 
\newcommand{\constg}{c_g} 
\title{Diverse Neural Network Learns True Target Functions}
\author{
Bo Xie\thanks{College of Computing, Georgia Institute of Technology. Email:\texttt{bo.xie@gatech.edu}}, ~
Yingyu Liang\thanks{Department of Computer Science, Princeton University. Email: \texttt{yingyul@cs.princeton.edu}},~
Le Song\thanks{College of Computing, Georgia Institute of Technology. Email:\texttt{lsong@cc.gatech.edu}}
}
\date{}
\begin{document}

\maketitle

\begin{abstract}
Neural networks are a powerful class of functions that can be trained with simple gradient descent to achieve state-of-the-art performance on a variety of applications. Despite their practical success, there is a paucity of results that provide theoretical guarantees on why they are so effective. 
Lying in the center of the problem is the difficulty of analyzing the non-convex loss function with potentially numerous local minima and saddle points. Can neural networks corresponding to the stationary points of the loss function learn the true target function? If yes, what are the key factors contributing to such nice optimization properties? 

In this paper, we answer these questions by analyzing one-hidden-layer neural networks with ReLU activation, and show that despite the non-convexity, neural networks with diverse units have no spurious local minima. We bypass the non-convexity issue by directly analyzing the first order optimality condition, and show that the loss can be made arbitrarily small if the minimum singular value of the ``extended feature matrix'' is large enough. We make novel use of techniques from kernel methods and geometric discrepancy, and identify a new relation linking the smallest singular value to the spectrum of a kernel function associated with the activation function and to the diversity of the units. Our results also suggest a novel regularization function to promote unit diversity for potentially better generalization.
\end{abstract}

\vspace{-3mm}
\section{Introduction}
\vspace{-2mm}

\setlength{\abovedisplayskip}{4pt}
\setlength{\abovedisplayshortskip}{1pt}
\setlength{\belowdisplayskip}{4pt}
\setlength{\belowdisplayshortskip}{1pt}
\setlength{\jot}{3pt}


Neural networks are a powerful class of nonlinear functions which have been successfully deployed in a variety of machine learning tasks. In the simplest form, neural networks with one hidden layer are linear combinations of nonlinear basis functions (units), 
\begin{equation}
  \label{eq:neuralnet}
  f(x) = \sum_{k=1}^n v_k \sigma(w_k^\top x) 
\end{equation}
where $\sigma(w_k^\top x)$ is a basis function with weights $w_k$, and $v_k$ is the corresponding combination coefficient. Learning with neural networks involves adapting both the combination coefficients and the basis functions at the same time, usually by minimizing the empirical loss
\begin{equation}
  \label{eq:emploss}
  L(f) = \frac{1}{2m} \sum_{l=1}^m \ell(y_l, f(x_l))
\end{equation}
with first-order methods such as (stochastic) gradient descent. It is believed that 
basis function adaptation is a crucial ingredient for neural networks to achieve more compact models and better performance~\cite{Barron93,YanMocDenFreetal14}. 

However, the empirical loss minimization problem involved in neural network training is non-convex with potentially numerous local mini\-ma and saddle points. This makes formal analysis of training neural networks very challenging. Given the empirical success of neural networks, a sequence of important and urgent scientific questions need to be investigated: Can neural networks corresponding to stationary points of the empirical loss learn the true target function? If the answer is yes, then what are the key factors contributing to their nice optimization properties? Based on these understandings, can we design better regularization schemes and learning algorithms to improve the training of neural networks?    


In this paper, we provide partial answers to these questions by analyzing one-hidden-layer neural networks with rectified linear units (ReLU) in a least-squares regression setting. We show that neural networks with diverse units have no spurious local minima.
More specifically, we show that the training loss of neural networks decreases in proportion to $\nbr{\partial L/\partial W}^2/s_m^2(D)$ where $\partial L/\partial W$ is the gradient and $s_m(D)$ is the minimum singular value of the extended feature matrix $D$ (defined in Section~\ref{sec:first_order}). The minimum singular value is lower bounded by two terms, where the first term is related to the spectrum property of the kernel function associate with the activation $\sigma(\cdot)$, and the second term quantifies the diversity of the units, measured by the classical notion of geometric discrepancy of a set of vectors. Essentially, the slower the decay of the spectrum, the better the optimization landscape; the more diverse the unit 
weights, the more likely stationary points will result in small training loss and generalization error. 

We bypass the hurdle of non-convexity by directly analyzing the first order optimality condition of the learning problem, which implies that there are no spurious local minima if the minimum singular value of the extended feature matrix is large enough. 
Bounding the singular value is challenging because it entangles the nonlinear activation function, the weights and data in a complicated way. Unlike most previous attempts, we directly analyze the effect of nonlinearity without assuming independence of the activation patterns from actual data; in fact, the dependence of the patterns on the data and the unit weights underlies the key connection to activation kernel spectrum and the diversity of the units.

We have constructed a novel proof, which makes use of techniques from geometric discrepancy and kernel methods, and have identified a new relation linking the smallest singular value to the diversity of the units and the spectrum of a kernel function associated with the unit. More specifically, 
%
%
\begin{itemize}[leftmargin=*,nosep,nolistsep]
  \item We identify and separate two factors in the minimum singular value: 1) an ideal spectrum that is related to the kernel of the activation function and an ideal configuration of diverse unit weights; 2) deviation from the ideal spectrum measured by how far away actual unit weights are from the diverse configuration. This new perspective reveals benign conditions in learning neural networks.
  \item We characterize the deviation from the ideal diverse weight configuration using the concept of discrepancy, which has been extensively studied in the geometric discrepancy theory. This reveals an interesting connection between the discrepancy of the weights and the training loss of neural networks. Therefore, it serves as a clean tool to analyze and verify the learning and the generalization ability of the networks.
\end{itemize}
Our results also suggest a novel regularization scheme to promote unit diversity for potentially better generalization. In~\cite{MarSra15}, it is shown that diversity of the neurons leads to smaller network size and better performance.

Whenever possible, we corroborate our theoretical analysis with numerical simulations. These numerical results include computing and verifying the relationship between the discrepancy of a learned neural network and the minimum singular value. Additionally, we measure the effects on the discrepancy with and without regularization. In all these examples, the experiments match with the theory nicely and they accord with the practice of using gradient descent to learn neural networks.

\vspace{-3mm}
\section{Related work}
\vspace{-2mm}

Kernel methods have many commonalities with one-hidden-layer neural networks. The random feature perspective~\cite{RahRec09,ChoSaul09} views kernels as linear combinations of nonlinear basis functions, similar to neural networks. The difference between the two is that the weights are random in kernels while in neural networks they are learned.
Using learned weights leads to considerable smaller models as shown in ~\cite{Barron93}. However it is a non-convex problem and it is difficult to find the global optima. \eg, one-hidden-layer networks are NP-complete to learn in the worst case~\cite{BluRiv93}. We will make novel use of techniques from kernel methods to analyze learning in neural networks.  

The empirical success of training neural networks with simple algorithms such as gradient descent has motivated researchers to explain their surprising effectiveness. In~\cite{ChoHenMatAroLec15}, the authors analyze the loss surface of a special random neural network through spin-glass theory and show that for many large-size networks, there is a band of exponentially many local optima, whose loss is small and close to that of a global optimum. The analyzed polynomial network is different from the actual neural network being used which typically contains ReLU nowadays. Moreover, the analysis does not lead to a generalization guarantee for the learned neural network.

A similar work shows that all local optima are also global optima in linear neural networks \cite{Kawaguchi16}. However their analysis for nonlinear neural networks hinges on independence of the activation patterns from the actual data, which is unrealistic. Some other works try to argue that gradient descent is not trapped in saddle points~\cite{LeeSimJorRec16,GeHuaJinYua15}, as was suggested to be the major obstacle in optimization~\cite{DauPasGulChoetal14}. There is also a seminal work using tensor method to avoid the non-convex optimization problem in neural network~\cite{MajSedAna15}. However, the resulting algorithm is very different from typically used algorithms where only gradient information of the empirical loss $L$ is used. 

\cite{SouCar16} is the closest to our work, which shows that zero gradient implies zero loss for all weights except an exception set of measure zero. However, this is insufficient to guarantee low training loss since small gradient can still lead to large loss. 
\yingyu{the smooth analysis depends on perturbation noise, but this is not the same as dropout. }
Furthermore, their analysis does not characterize the exception set and it is unclear a priori whether the set of local minima fall into the exception set.


\vspace{-3mm}
\section{Problem setting and preliminaries}
\vspace{-2mm}


We will focus on a special class of data distributions where the input $x \in \RR^d$ is drawn uniformly from the unit sphere,\footnote{The restriction of input to the unit sphere is not too stringent since many inputs are already normalized. Furthermore, it is possible to relax the uniform distribution to sub-gaussian rotationally invariant distributions, but we use the current assumption for simplicity.} 
and assume that $\abr{y} \le Y$.
We consider the following hypothesis class:
\begin{align}
  \Fcal = \cbr{ \sum_{k=1}^n v_k \sigma(w_k^\top x) :  v_k \in \cbr{\pm 1},  \sum_{k=1}^n \nbr{w_k} \le C_W }  
\end{align}
where $\sigma(\cdot)=\max\{0, \cdot\}$ is the rectified linear unit (ReLU) activation function, $\cbr{w_k}$ and $\cbr{v_k}$ are the unit weights and combination coefficients respectively, $n$ is the number of units, and $C_W$ is some constant. We restrict $v_k \in \cbr{-1, 1}$ due to the positive homogeneity of ReLU, 
\begin{align*}
	f(x) = \sum_{k=1}^n v_k \sigma(w_k^\top x) = \sum_{k=1}^n \frac{v_k}{|v_k|} \sigma(|v_k| w_k^\top x).  
\end{align*}
That is, the magnitude of $v_k$ can always be scaled into the corresponding $w_k$. 
For convenience, let 
\begin{align}
   W:=(w_1^\top,w_2^\top,\ldots,w_k^\top)^\top
\end{align}
be the column concatenation of the unit parameters; also let $W$ denote the set $\cbr{w_i: 1\le i \le k}$. Let 
\begin{align}
 \Fcal_W = \cbr{W: \sum_k \nbr{w_k} \le C_W}
\end{align} 
denote the feasible set of $W$'s. A function $f \in \Fcal$ will depend on $v$ and $W$, and it can be written as $f(x; v, W)$. But when clear from the context, it is shorten as $f(x)$.

Given a set of \iid~training examples $\cbr{x_l, y_l}_{l=1}^m \subseteq \RR^d \times \RR$, we want to find a function $f\in\Fcal$ which minimizes the following least-squares loss function
\footnote{Our approach can also be applied to some other loss functions. For example, in logistic regression for $y\in \cbr{\pm 1}$, when the loss is $\frac{-1}{m} \sum_{\ell=1}^m \log \sbr{\text{sigmoid}(y_\ell f(x_\ell))}$, we can bound the error $\EE\sbr{\text{sigmoid}(y f(x)) - 1}^2$. 
}
\begin{align*}
L(f) =  \frac{1}{2m}\sum_{l=1}^m\rbr{y_l - f(x_l)}^2 .
\end{align*}
Typically, gradient descent over $L(f)$ is used to learn all the parameters in $f$, and a solution with small gradient is returned at the end.\footnote{Note that even though ReLU is not differentiable, we can still use its sub-gradient by defining $\sigma'(u) = \II\sbr{u > 0}$.} However, adjusting the bases $\cbr{w_k}$ leads to a non-convex optimization problem, and there is no theoretical guarantee that gradient descent can find global optima.

Our primary goal is to identify conditions under which there are no spurious local minima.
 We need to identify a set $\Gcal_W \subseteq \Fcal_W$ such that when gradient descent outputs a solution $W \in \Gcal_W$ with the gradient norm $\nbr{\partial L / \partial W}$ smaller than $\epsilon$, then the training and test errors can be bounded by $O(\epsilon^2)$. Ideally, $\Gcal_W$ should have clear characterization that can be easily verified, and should contain most $W$ in the parameter space (especially those solutions obtained in practice). 

On notation, we will use $c$, $c'$ or $C$, $C'$ to denote constants and its value may change from line to line.

\subsection{First order condition} \label{sec:first_order}

In this section, we will rewrite the set of first order conditions for minimizing the empirical loss $L$. This rewriting motivates the direction of our later analysis. More specifically, 
the gradient of the empirical loss w.r.t.\ $w_k$ is
\begin{align} \label{eqn:derivative}
\frac{\partial L}{\partial w_k} &= \frac{1}{m} \sum_{l=1}^m \rbr{f(x_l) - y_l}v_k\sigma^{\prime}(w_k^\top x_l) x_l, 
\end{align}
for all $k=1,\ldots,n$. We will express this collection of gradient equations using matrix notation. Define the ``extended feature matrix'' as 
\begin{align}
D &= \rbr{\begin{array}{ccc}
v_1\sigma^{\prime}(w_1^\top x_1) x_1 & \cdots & v_1\sigma^{\prime}(w_1^\top x_m) x_m \\
\cdots & \cdots  & \cdots \\
v_k\sigma^{\prime}(w_k^\top x_1) x_1 & \cdots & v_k\sigma^{\prime}(w_k^\top x_m) x_m \\
\cdots & \cdots  & \cdots \\
v_n\sigma^{\prime}(w_n^\top x_1) x_1 & \cdots & v_n\sigma^{\prime}(w_n^\top x_m) x_m
\end{array}}, \nonumber
\end{align}
and the residual as 
$$r = \frac{1}{m}\rbr{f(x_1)-y_1, \cdots, f(x_m) - y_m}^\top.$$
Then we have
\begin{align}
\frac{\partial L}{\partial W} := \rbr{\frac{\partial L}{\partial w_1}^\top,\ldots,\frac{\partial L}{\partial w_n}^\top}^\top = D\ r. 
\end{align}
A stationary point has zero gradient, so if $D \in \RR^{dn \times m}$ has full column rank, then immediately $r= 0$, \ie, it is actually a global optimal. Since $nd > m$ is necessary for $D$ to have full column rank, we assume this throughout the paper. 

However, in practice, we will not have the gradient being exactly zero because, \eg, we stop the algorithm in finite steps or because we use stochastic gradient descent (SGD). In other words, typically we only have $\nbr{\partial L / \partial W}\leq \epsilon$, and $D$ being full rank is insufficient since small gradient can still lead to large loss. More specifically, let $s_m(D)$ be the minimum singular value of $D$, we have
\begin{align}\label{eqn:residual_bound}
  \nbr{r} \le \frac{1}{s_m(D)}\nbr{\frac{\partial L}{\partial W}}.  
\end{align}
We can see that $s_m(D)$ needs to be large enough for the residual to be small. Thus it is important to identify conditions to lower bound $s_m(D)$ away from zero, which will be the focus of the paper. 

\vspace{-3mm}
\subsection{Spectrum decay of activation kernel}
\vspace{-2mm}

We will later show that $s_m(D)$ is related to the decay rate of the kernel spectrum associated with the activation function. More specifically, for an activation function $\sigma(w^\top x)$, we can define the following kernel function
\begin{align}\label{eqn:gxy}
g(x,y) = \EE_w [\sigma'(w^\top x) \sigma'(w^\top y) \inner{x}{y}]
\end{align}
where $\EE_w$ is over $w$ uniformly distributed on a sphere. 

In particular, for ReLU, the kernel has a closed form
\begin{align}\label{eqn:relu_kernel}
g(x,y) = \rbr{\frac{1}{2} - \frac{\arccos\inner{x}{y}}{2\pi}}  \inner{x}{y}. 
\end{align} 
In fact, it is a dot-product kernel and its spectrum can be obtained through spherical harmonic decomposition:
\begin{align}
  g(x,y) = \sum_{u=1}^\infty \gamma_u \phi_u(x) \phi_u(y),
\end{align}
where the eigenvalues are ordered $\gamma_1 \ge \cdots \ge \gamma_m \ge \cdots \ge 0$ and the bases $\phi_u(x)$ are spherical harmonics. The $m$-th eigenvalue $\gamma_m$ will be related to $s_m(D)$.

For each spherical harmonic of order $t$, there are $N(d, t)= \frac{2t+d-2}{t}\mychoose{t+d-3}{t-1}$ basis functions sharing the same eigenvalue. Therefore, the spectrum has a step like shape where each step is of length $N(d,t)$. Especially, for high dimensional input $x$, the number of such basis functions with large eigenvalues can be very large. Figure~\ref{fig:spectrum} illustrates the spectrum of the kernel for $d=1500$, and it is about $\Omega(m^{-1})$ for a large range of $m$.
For more details about the decomposition, please refer to Appendix~\ref{app:spherical_spectrum}.

Such step like shape also appears in the Gram matrix associated with the kernel. Figure~\ref{fig:spectrum_approx} compares the spectra of the kernel of $d=15$ and the corresponding Gram matrix with $m=3000$. We can see the spectrum of the Gram matrix closely resembles that of the kernel. Such concentration phenomenon underlies the reason why the spectrum of $D^\top D$ is closely related to the corresponding kernel.

\begin{figure}
\centering
\begin{minipage}[b]{0.48\textwidth}
\centering
\includegraphics[width=0.8\columnwidth]{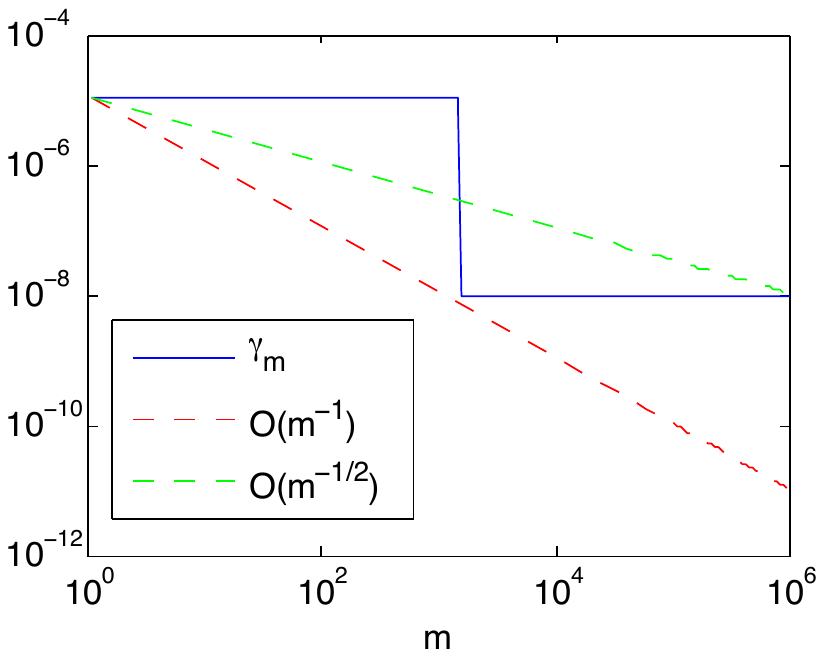}
\caption{The spectrum decay of the kernel associated with ReLU. We set $d = 1500$. It is decays slower than $O(1/m)$ for a large range of $m$.}\label{fig:spectrum}
\end{minipage}
\hfill
\begin{minipage}[b]{0.48\textwidth}
\centering
\includegraphics[width=0.8\columnwidth]{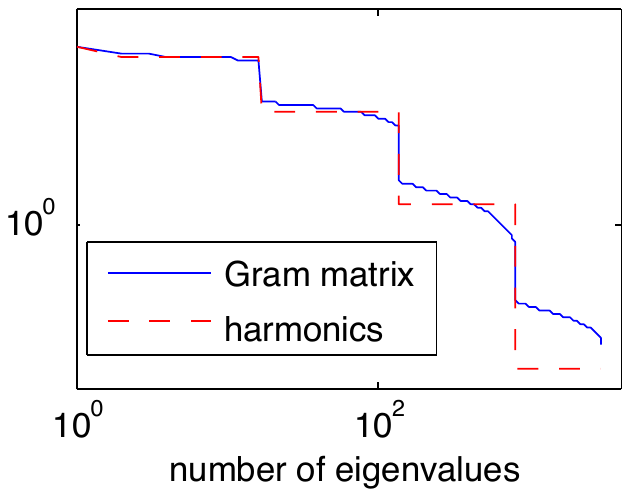}
\caption{The spectrum of a Gram matrix concentrates around the spherical harmonic spectrum of the kernel.}\label{fig:spectrum_approx}
\end{minipage}
\end{figure}



\vspace{-3mm}
\subsection{Weight discrepancy} 
\vspace{-2mm}

Another key factor in the analysis is the diversity of the unit weights, measured by the classic notion of geometric discrepancy~\cite{BilLac15}. Given a set of $n$ points $W = \cbr{w_k}_{k=1}^n$ on the unit sphere $\mathbb{S}^{d-1}$,
the discrepancy of $W$ w.r.t.\ a measurable set $S \subseteq \mathbb{S}^{d-1}$ is defined as
\begin{align}
  \text{dsp}(W, S) = \frac{1}{n} \abr{W \cap S} - A(S),
\end{align}
where $A(S)$ is the normalized area of $S$ (i.e., the area of the whole sphere $A(\mathbb{S}^{d-1}) = 1$). $\text{dsp}(W, S)$ quantifies the difference between the empirical measure of $S$ induced by $W$ and the measure of $S$ induced by a uniform distribution.  

By defining a collection $\Scal$ of such sets, we can summarize the difference in the empirical measure induced by $W$ versus the uniform distribution over the sphere. More specifically, we will focus on the set of slices, each defined by a pair of inputs $x,y \in \mathbb{S}^{d-1}$, \ie, 
\begin{align}
  \Scal &= \cbr{S_{xy} : x, y \in \mathbb{S}^{d-1}},~\textnormal{where}\nonumber\\
  S_{xy} &= \cbr{w \in \mathbb{S}^{d-1} : w^\top x \ge 0, w^\top y \ge 0}. \label{eqn:slice}
\end{align}
Essentially, each $S_{xy}$ defines a slice-shaped area on the sphere which is carved out by the two half spaces $w^\top x\geq 0$ and $w^\top y\geq 0$. 

Based on the collection $\Scal$, we can define two discrepancy measures relevant to ReLU units. 
$L_\infty$ discrepancy of $W$ w.r.t.\ $\Scal$ is defined as
\begin{align}
  \label{eq:linf_dsp}
  L_\infty(W, \Scal) = \sup_{S \in \Scal} \abr{\text{dsp}(W, S)},
\end{align}
and the $L_2$ discrepancy as
\begin{align}
  \label{eq:l2_dsp}
  L_2(W, \Scal) = \sqrt{ \EE_{x,y} \text{dsp}(W, S_{xy})^2}
\end{align}
where the expectation is taken over $x,y$ uniformly on the sphere. 
We use $L_\infty(W)$ and $L_2(W)$ as their shorthands. Both discrepancies measure how diverse the points $W$ are. The more diverse the points, the smaller the discrepancy.

For our analysis, we slightly generalize the discrepancy for $w_k$'s not on unit sphere, by setting
\begin{align}
 \text{dsp}(\cbr{w_k}_k, S) = \text{dsp}(\cbr{ w_k / \|w_k\|}_k, S).
\end{align}

\vspace{-3mm}
\section{Main results}
\vspace{-2mm}

Our main result is a bound on the training loss and the generalization error, assuming sufficiently large $n, d$. 
To state the theorem, first recall that $\beta \in (0, 1)$ is the decay exponent of the spectrum of the activation kernel in \eq{eqn:relu_kernel}, that is, $\gamma_m$ is the $m$-th eigenvalue of the kernel and satisfies $\gamma_m = \Omega(m^{-\beta})$.
Also recall that $\Fcal_W$ denote the set of feasible values of $W$.

\begin{theorem}[Main, simplified] \label{thm:main}
Let $\delta, \delta' \in (0, 1)$. If 
\[
  n = \tilde\Omega(m^{\beta}), \quad d = \tilde\Omega(m^{\beta}),
\]
then there exists a set $\Gcal_W \subseteq \Fcal_W$ which takes up $1-\delta'$ fraction of measure of $\Fcal_W$, such that with probability at least $1- cm^{-\log m} - \delta$ the following holds. For any $W \in \Gcal_W$ and any $v \in \cbr{-1, 1}^n$, we have 
\begin{align*}
  & \frac{1}{2m} \sum_{\ell=1}^m (f(x_\ell; v,W) - y_\ell)^2  \le c \nbr{\frac{\partial L}{\partial W}}^2, \\
	& \frac{1}{2} \EE (f(x; v,W) - y)^2 \le  c\nbr{\frac{\partial L}{\partial W}}^2 + c'(C_W + Y)^2 \sqrt{\frac{1}{m}\log \frac{1}{\delta}}.
\end{align*}
\end{theorem}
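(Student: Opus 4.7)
Inequality~(\ref{eqn:residual_bound}) reduces both bounds to a lower bound on $s_m(D)$, since $L(f) = \tfrac m2 \nbr{r}^2 \le \tfrac{m}{2 s_m(D)^2}\nbr{\partial L/\partial W}^2$, so it suffices to prove $s_m(D)^2 = \Omega(m)$ uniformly over $W \in \Gcal_W$ on a high-probability event. I would begin by expanding $\tfrac1n D^\top D$ entrywise. Because $v_k^2 = 1$ and $\sigma'(w^\top x)\sigma'(w^\top y) = \II[w/\nbr{w} \in S_{xy}]$, one obtains
\begin{align*}
\tfrac{1}{n}[D^\top D]_{ij} = \inner{x_i}{x_j}\bigl(A(S_{x_ix_j}) + \text{dsp}(W, S_{x_ix_j})\bigr) = g(x_i, x_j) + E_{ij},
\end{align*}
with $E_{ij} := \inner{x_i}{x_j}\,\text{dsp}(W, S_{x_ix_j})$. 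Hence $\tfrac1n D^\top D = G + E$ where $G$ is the kernel Gram matrix for $g$, and Weyl's inequality gives $s_m(D)^2 \ge n(s_m(G) - \nbr{E})$. The remaining two tasks are to lower-bound $s_m(G)$ and to upper-bound $\nbr{E}$.

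For $s_m(G)$ I would use a Nystr\"om-type concentration argument. The spherical-harmonic expansion $g(x,y) = \sum_u \gamma_u \phi_u(x)\phi_u(y)$ realizes $G$ as a sample covariance of the features $(\sqrt{\gamma_u}\phi_u(x))_u$; a matrix concentration bound then yields $s_m(G) \ge \tfrac12 m\gamma_m$ with failure probability $cm^{-\log m}$, provided $d = \tilde\Omega(m^\beta)$ is large enough that the multiplicity $N(d,t)$ at the critical spherical-harmonic level comfortably exceeds $m$. For $\nbr{E}$, I would decompose $E = \tfrac1n \sum_k E^{(k)}$ with $E^{(k)}_{ij} := \inner{x_i}{x_j}(\II[w_k/\nbr{w_k} \in S_{x_ix_j}] - A(S_{x_ix_j}))$ into independent mean-zero bounded matrices and apply a matrix-Bernstein-style inequality indexed by the slice class $\Scal$, aiming for $\nbr{E} \lesssim \sqrt{m}\,L_2(W)$ (the naive Frobenius estimate $\nbr{E}_F \lesssim m L_2(W)$ is too lossy for the advertised scaling of $n$).

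\textbf{Defining $\Gcal_W$ and finishing.} Take $\Gcal_W := \{W \in \Fcal_W : L_2(W) \le \gamma_m/(4\sqrt m)\}$. To show $\Gcal_W$ carries $1-\delta'$ of the measure of $\Fcal_W$, I would use rotation-invariance of the natural uniform measure to deduce that each direction $w_k/\nbr{w_k}$ is marginally uniform on $\mathbb{S}^{d-1}$, whence $\EE_W \text{dsp}(W,S)^2 = A(S)(1-A(S))/n \le 1/(4n)$ gives $\EE_W L_2(W)^2 \le 1/(4n)$; Markov's inequality together with $n = \tilde\Omega(m^\beta) = \tilde\Omega(1/\gamma_m)$ (polylogs absorbed) completes the measure bound. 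Assembling, $s_m(D)^2 \ge n m \gamma_m / 4 = \Omega(m)$, which gives the first (training-loss) inequality via (\ref{eqn:residual_bound}). For the second (generalization) inequality, I would add a standard Rademacher-complexity argument: since ReLU is $1$-Lipschitz and $\sum_k \nbr{w_k} \le C_W$, the Rademacher complexity of $\Fcal$ is $O(C_W/\sqrt m)$ by contraction; a uniform convergence bound on the squared loss (bounded by $(C_W+Y)^2$ since $\abr{f(x)} \le C_W$ when $\nbr{x}=1$) then produces the $c'(C_W+Y)^2\sqrt{(1/m)\log(1/\delta)}$ term.

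\textbf{Main obstacles.} The hardest piece is operator-norm control of $E$ by $L_2$ discrepancy: the Frobenius route yields only $\nbr{E} \lesssim m L_2(W)$, which would force a strictly worse condition on $n$, so recovering $n = \tilde\Omega(m^\beta)$ genuinely requires a matrix-Bernstein argument that exploits independence of the $E^{(k)}$ across $k$ and the VC-type structure of $\Scal$. Secondarily, the Nystr\"om bound $s_m(G) \ge \tfrac12 m\gamma_m$ is delicate because $\gamma_m$ is only polynomially small and the spectrum is step-like, requiring careful treatment at the boundary of the critical spherical-harmonic level. Finally, upgrading a ``typical random $W$'' statement to a measure statement on $\Fcal_W$ (which is a norm-ball, not a product of spheres) calls for a disintegration on radial coordinates to expose the marginal uniformity of the directions.
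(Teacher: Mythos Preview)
Your overall architecture matches the paper: write $\tfrac{1}{n}D^\top D = G + E$, lower-bound $\lambda_m(G)$ by matrix Chernoff on the truncated spherical-harmonic features, upper-bound $\nbr{E}$, and finish with Rademacher complexity. The generalization piece and the $\lambda_m(G)\ge m\gamma_m/2$ piece are essentially right.

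The genuine gap is your control of $\nbr{E}$. You propose to write $E=\tfrac1n\sum_k E^{(k)}$ and apply a matrix-Bernstein inequality ``exploiting independence of the $E^{(k)}$ across $k$.'' But for any fixed $W$ (in particular for any $W\in\Gcal_W$), the matrices $E^{(k)}$ are deterministic---there is no randomness in $k$ to concentrate over. If instead you mean to take $W$ random, then matrix Bernstein would give a high-probability-over-$W$ bound that is not a function of $L_2(W)$, contradicting your own definition $\Gcal_W=\{L_2(W)\le\gamma_m/(4\sqrt m)\}$; and you would still need the bound to hold uniformly over the data, not just for one sample. Either way, the mechanism that turns ``$L_2(W)$ small'' into ``$\nbr{E}$ small'' is missing, and this is precisely the step that determines whether $n=\tilde\Omega(m^\beta)$ suffices.

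The paper's route is different and simpler than matrix Bernstein. It separates diagonal from off-diagonal and uses the data distribution: since the $x_i$ are uniform on $\mathbb{S}^{d-1}$, with high probability $\abr{\inner{x_i}{x_j}}\le c\log d/\sqrt d$ for all $i\neq j$. This already buys a factor of $\log d/\sqrt d$ in front of the Frobenius-type term:
\[
  \nbr{E} \;\le\; \frac{c\log d}{\sqrt d}\sqrt{\sum_{i\neq j}\text{dsp}(W,S_{x_ix_j})^2}\;+\;\max_i\abr{\text{dsp}(W,S_{x_ix_i})}.
\]
Then $\tfrac{1}{m(m-1)}\sum_{i\neq j}\text{dsp}(W,S_{x_ix_j})^2$ is a U-statistic in the data with mean $(L_2(W))^2$, and Bernstein for U-statistics gives concentration. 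The net effect is $\nbr{E}\lesssim \tfrac{\log d}{\sqrt d}\,m\,L_2(W)$ (plus lower-order terms), so the extra saving that lets $n,d=\tilde\Omega(m^\beta)$ work comes from the $1/\sqrt d$ in the inner products, not from any cancellation across the $n$ units. Your ``Frobenius route is too lossy'' diagnosis is correct only because you implicitly bounded $\abr{\inner{x_i}{x_j}}\le 1$; once you use $\abr{\inner{x_i}{x_j}}\lesssim 1/\sqrt d$, the Frobenius route is exactly what the paper does.

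A secondary issue: your Markov-based measure bound on $\Gcal_W$ is too weak for the stated threshold. With $\EE_W L_2(W)^2\le 1/(4n)$, Markov gives $\Pr[L_2(W)>t]\le 1/(4nt^2)$, and plugging your threshold forces $n$ much larger than $m^\beta$. The paper instead proves a Bernstein-type bound for $(L_2(W))^2$ (via its closed form as a U-statistic in the $w_k$'s), obtaining $(L_2(W))^2\lesssim \sqrt{\log d/(nd)}+1/n$ with probability $1-\delta'$, which is what actually matches the $n,d=\tilde\Omega(m^\beta)$ regime.
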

Here $\tilde\Omega$ hides logarithmic terms $\log m \log \frac{1}{\delta} \log\frac{1}{\delta'}$.

\paragraph{Remark 1.}  Intuitively, the theorem means that when $n, d$ are sufficiently large, for most feasible weights $W$, the training loss is in the order of the square norm of the gradient, and the generalization error has an additional term $\Otil(1/\sqrt{m})$. 
In particular, when we obtain an solution $W \in \Gcal_W$ with gradient $\nbr{\partial L/\partial W}^2 \le \epsilon$, the training loss is bounded by $O(\epsilon)$, and the generalization error is $\Otil(\epsilon + 1/\sqrt{m})$. So neural networks trained with sufficiently many data points to a solution with small gradient in $\Gcal_W$ generalize well. 
This essentially means that although non-convex, the loss function over this set is well behaved, and thus learning is not difficult. 

Note that an immediate corollary is that any critical point in $\Gcal_W$ is global optimum. 

Furthermore, a randomly sampled set of weights $W$ are likely to fall into this set. This suggests a reason for the practical success of training with random initialization: after initialization, the parameters w.h.p. fall into the set, then would stay inside during training, and finally get to a point with small gradient, which by our analysis, has small error.

\paragraph{Remark 2.} 
An important feature about our result is that the set $\Gcal_W$ has a simple explicit form:
\begin{align} \label{eqn:gw}
	\Gcal_W = \cbr{W \in \Fcal_W: (L_2(W))^2 \le c_g \rbr{\sqrt{\frac{\log d}{nd} \log \frac{1}{\delta'}} + \frac{1}{n} \log \frac{1}{\delta'}} }
\end{align}
where $c_g > 0$ is a universal constant. Furthermore,  $(L_2(W))^2$ has a simple closed form (See Theorem~\ref{thm:stolarsky}). Therefore, it is possible to directly check if a solution $W$ is in $\Gcal_W$, or design regularization that make sure $W$ stays in the set $\Gcal_W$.

\paragraph{Remark 3.} 
The above theorem is a special case of the following more general result.

\begin{theorem}[Main, general] \label{thm:error} 
Let $\delta \in (0,1)$.
With probability $\ge 1 - m\exp\rbr{-m\gamma_m/8} - 2m^2\exp\rbr{-4\log^2 d} - \delta$, the following holds. For any $\xi, \eta > 0 $ and any $W$ with $L_2(W) = \Otil\rbr{n^{-\xi} d^{-\eta}}$ such that 
\begin{align}
  m^{\beta} \le \Otil\cbr{ d^{(1+\eta)/2} n^{\xi/2} m^{1/4}, d^{1/2} m^{1/2}, n^{\xi} d^{1/2 + \eta} }, \label{eqn:thm:error}
\end{align}
and any $v \in \cbr{-1, 1}^n$, we have
\begin{align*}
  & \frac{1}{2m} \sum_{\ell=1}^m (f(x_\ell; v, W) - y_\ell)^2  \le \frac{c m^\beta}{n} \nbr{\frac{\partial L}{\partial W}}^2, \\
	& \frac{1}{2} \EE (f(x; v, W) - y)^2 \le  \frac{c m^\beta}{n} \nbr{\frac{\partial L}{\partial W}}^2 + c'(C_W + Y)^2 \sqrt{\frac{1}{m}\log \frac{1}{\delta}}.
\end{align*}
\end{theorem}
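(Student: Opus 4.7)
The plan is to reduce Theorem~\ref{thm:error} to the single spectral lower bound $s_m(D)^2 \ge c^{-1} n m^{1-\beta}$. From the first-order reformulation in Section~\ref{sec:first_order}, $\partial L/\partial W = Dr$ gives $\|\partial L/\partial W\| \ge s_m(D)\|r\|$, and the training loss equals $\tfrac{m}{2}\|r\|^2$; plugging in the spectral bound yields the first claimed inequality. The generalization inequality then follows from a standard Rademacher-complexity symmetrization on $\Fcal$, using $\sum_k\|w_k\|\le C_W$, $|y|\le Y$, and the $1$-Lipschitz property of ReLU; this produces the additive $(C_W+Y)^2\sqrt{\log(1/\delta)/m}$ term and the $\delta$ in the failure probability.

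To lower bound $\lambda_{\min}(D^\top D)$, I would use $v_k^2 = 1$ and the definition of $\sigma'$ to compute $(D^\top D)_{ij} = \langle x_i, x_j\rangle \cdot |\{k : w_k \in S_{x_i x_j}\}|$, then rewrite the normalized count as $A(S_{x_i x_j}) + \mathrm{dsp}(W, S_{x_i x_j})$. This gives
\begin{align*}
	\tfrac{1}{n} D^\top D \;=\; G + E, \qquad G_{ij} = g(x_i, x_j), \qquad E_{ij} = \langle x_i, x_j\rangle\, \mathrm{dsp}(W, S_{x_i x_j}),
\end{align*}
so $s_m(D)^2 \ge n\bigl(\lambda_{\min}(G) - \|E\|_{\mathrm{op}}\bigr)$. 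I would bound $\|E\|_{\mathrm{op}} \le \|E\|_F$ and concentrate $\tfrac{1}{m^2}\|E\|_F^2$ to $(L_2(W))^2 = \EE_{x,y}\mathrm{dsp}(W, S_{xy})^2$ by a Bernstein-type argument over the $m^2$ pairs $(x_i, x_j)$; combined with the hypothesis $L_2(W) = \tilde O(n^{-\xi} d^{-\eta})$, this yields $\|E\|_{\mathrm{op}} = \tilde O(m\, L_2(W))$ except on an event of probability at most $2m^2 \exp(-4\log^2 d)$, which is where the middle tail term originates.

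For $\lambda_{\min}(G)$, I would use the spherical-harmonic decomposition $g(x,y) = \sum_u \gamma_u \phi_u(x)\phi_u(y)$ and truncate at level $m$ to obtain $G \succeq \gamma_m\, \Phi\Phi^\top$ with $\Phi_{iu} = \phi_u(x_i)$ for $u \le m$. Since the $\phi_u$ are orthonormal under the uniform measure on $\mathbb{S}^{d-1}$, $\EE[\tfrac{1}{m}\Phi^\top\Phi] = I$, and a matrix-Chernoff bound yields $\lambda_{\min}(\Phi\Phi^\top) \ge m/2$ except with probability $m\exp(-m\gamma_m/8)$; hence $\lambda_{\min}(G) = \Omega(m\gamma_m) = \Omega(m^{1-\beta})$. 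The three conditions in~\eqref{eqn:thm:error} are calibrated precisely so that the $\|E\|_{\mathrm{op}}$ slack is dominated by this main term, producing $s_m(D)^2 = \Omega(n m^{1-\beta})$ and closing the argument.

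The hardest step I anticipate is bounding $\|E\|_{\mathrm{op}}$ uniformly over all $W$ of prescribed discrepancy. The entries of $E$ are strongly correlated through the common weight set $W$ and the random slice directions $S_{x_i x_j}$, so a naive Frobenius-to-operator conversion is loose; tightening it requires either a Stolarsky-style closed form for $(L_2(W))^2$ (cf.\ Theorem~\ref{thm:stolarsky}) combined with an $\epsilon$-net over slice parameters, or a decoupling argument separating the randomness in $W$ from that in the data. Simultaneously matching the three exponent constraints of~\eqref{eqn:thm:error} --- one arising from the Frobenius bound on $E$, one from the spectral concentration of $\Phi^\top\Phi$, and one from the Frobenius-vs-operator gap --- is the principal bookkeeping challenge of the proof.
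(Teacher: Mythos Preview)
Your high-level strategy matches the paper's exactly: decompose $D^\top D/n = G + (G_n-G)$, lower bound $\lambda_m(G)$ via matrix Chernoff on the truncated kernel, control the perturbation through the discrepancy, and finish with the first-order identity plus Rademacher complexity. Two details in your perturbation estimate, however, would prevent the argument from recovering the stated conditions.

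First, $\tfrac{1}{m^2}\|E\|_F^2$ does \emph{not} concentrate to $(L_2(W))^2$: since $E_{ij} = \langle x_i,x_j\rangle\,\mathrm{dsp}(W,S_{x_ix_j})$, the target is $\EE_{x,y}\bigl[\langle x,y\rangle^2\,\mathrm{dsp}(W,S_{xy})^2\bigr]$. The paper exploits this inner-product factor by first conditioning on the event $\max_{i\neq j}|\langle x_i,x_j\rangle| \le c\log d/\sqrt{d}$ (this is precisely the origin of the $2m^2\exp(-4\log^2 d)$ tail), and only then bounding $\sum_{i\neq j}\mathrm{dsp}_{ij}^2$. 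Without extracting this $\sqrt{d}$ gain your bound $\|E\| = \tilde O(mL_2(W))$ forces $n^\xi d^\eta \gtrsim m^\beta$, strictly stronger than the theorem's $n^\xi d^{1/2+\eta} \gtrsim m^\beta$. Second, the three constraints in~\eqref{eqn:thm:error} do not come from three separate mechanisms as you describe. They all arise from the single term $\|G_n-G\|$: after peeling off the $\langle x_i,x_j\rangle$ factor, one applies a Bernstein inequality for U-statistics to $\tfrac{1}{m(m-1)}\sum_{i\neq j}\mathrm{dsp}_{ij}^2$, whose mean is $(L_2(W))^2$, variance is at most $(L_2(W)L_\infty(W))^2$, and range is $L_\infty(W)^2 \le 4$. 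The mean, variance-tail, and range-tail terms of that inequality produce exactly the three error terms of Theorem~\ref{thm:singular_value} and hence the three conditions; the matrix-Chernoff step contributes only the $m\exp(-m\gamma_m/8)$ failure probability.
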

Here $\tilde\Omega$ hides logarithmic terms $\log m \log \frac{1}{\delta} \log \frac{1}{\delta'}$.

The theorem shows that when the weights are diverse (\ie, with good discrepancy $L_2(W)$ so that $\xi$ and $\eta$ are sufficiently large), the training loss is proportional to $m^\beta /n$ times the squared norm of the gradient. This implies a local minimum leads to a global minimum and the neural network learns the target function. The generalization error has an additional term $\Otil\rbr{1/\sqrt{m}}$. 

To obtain Theorem~\ref{thm:main} from this more general result, we first note that Lemma~\ref{lem:l2discrepancy} proves most $W$ falls into $\Gcal_W$, so it is sufficient to choose $n, d$ large enough to guarantee the condition (\ref{eqn:thm:error}). Setting $n = \Otil(m^\beta)$ and $d = \Otil(m^\beta)$ satisfies the condition with $\xi=\eta=1/4$, and thus leads to Theorem~\ref{thm:main}. 
Clearly, there exist some other options. For example, $d = \tilde\Omega\rbr{m^{2\beta - 1}}$ and $n = \tilde\Omega\rbr{ m^{3 - 2\beta}}$, which matches the empirical observation that overspecified networks with large $n$ are easier for the optimization.

\section{Analysis roadmap}

Our key technical result is a lower bound on the smallest singular value of $D$ based on the spectrum of the activation kernel defined in \eq{eqn:relu_kernel} and the discrepancy of the weights defined in~\eq{eq:l2_dsp}. 
Once the lower bound is obtained, we can use (\ref{eqn:residual_bound}) to bound the training loss, and use Rademachar complexity to bound the generalization error. 

\begin{theorem}[Smallest singular value] \label{thm:singular_value}
With probability $\ge 1 - m\exp\rbr{-m\gamma_m/8} - 2m^2\exp\rbr{-4\log^2 d} - \delta$, the following holds. For any $\xi, \eta > 0$, any $W$ with $L_2(W) = \Otil\rbr{n^{-\xi} d^{-\eta}}$, and any $v \in \cbr{-1, 1}^n$,
\begin{align*}
s_m(D)^2  \ge \Omega(nm^{1-\beta}) - \Otil\Bigg(\frac{n^{1-\xi/2 } m^{3/4} }{d^{(1+\eta)/2}}\Bigg) 
  -\Otil\Bigg(\frac{n m^{1/2}}{d^{1/2}}\Bigg)  - \Otil\Bigg(\frac{n^{1-\xi}m}{d^{ 1/2 + \eta}} \Bigg).
\end{align*}
\end{theorem}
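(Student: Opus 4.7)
Working with $D^\top D \in \RR^{m\times m}$ (so $s_m(D)^2 = \lambda_{\min}(D^\top D)$), my plan is to decompose $D^\top D = nG + E$, where $G_{ij} = g(x_i,x_j)$ is the Gram matrix of the activation kernel from \eq{eqn:relu_kernel} and $E$ is a zero-mean deviation, and then conclude via Weyl's inequality $s_m(D)^2 \ge \lambda_m(nG) - \|E\|$. The key algebraic step, using $v_k^2 = 1$ and $\sigma'(u) = \II\sbr{u\ge 0}$, gives $(D^\top D)_{ij} = \inner{x_i}{x_j}\,|W\cap S_{x_i x_j}|$ and hence the identity $E_{ij} = n\,\inner{x_i}{x_j}\,\text{dsp}(W, S_{x_i x_j})$. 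This exactly rewrites the empirical-vs-population gap as an inner-product-weighted discrepancy on the slices $S_{x_i x_j}$, which is what allows the geometric discrepancy apparatus of Section 3 to enter the singular-value analysis.

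\textbf{The two bounds.} For the ideal part, since $x_1,\dots,x_m$ are i.i.d.\ on the sphere and $\gamma_m = \Omega(m^{-\beta})$, I would apply a Bernstein-type concentration of the empirical Gram matrix around its kernel expectation to obtain $\lambda_m(G) \ge m\gamma_m/2$ with probability at least $1 - m\exp(-m\gamma_m/8)$; this produces the positive term $\Omega(nm^{1-\beta})$ and matches the first failure probability in the theorem. For the deviation, I would split $E$ into its diagonal and off-diagonal parts. On the diagonal, $\text{dsp}(W, S_{x_i x_i})$ reduces to a half-space discrepancy, and a Bernstein/Chernoff bound over the $m$ diagonal entries (using $\inner{x_i}{x_i}=1$) yields a spectral contribution of order $\tilde O(nm^{1/2}/d^{1/2})$, the term independent of $\xi,\eta$. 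For the off-diagonal, I would use $\|E_{\mathrm{off}}\| \le \|E_{\mathrm{off}}\|_F$ together with the uniform sphere-tail estimate $\max_{i\ne j}|\inner{x_i}{x_j}| \le c\log d/\sqrt d$ (valid with probability $\ge 1-2m^2\exp(-4\log^2 d)$, matching the second failure term), and decompose $\sum_{i\ne j}\text{dsp}(W, S_{x_i x_j})^2 = m^2 L_2(W)^2 + \text{fluctuation}$. The deterministic piece combined with $L_2(W) = \tilde O(n^{-\xi} d^{-\eta})$ gives $\tilde O(n^{1-\xi}m/d^{1/2+\eta})$, and the fluctuation (controlled by a Hoeffding/U-statistic bound over the $O(m^2)$ pairs combined with an $L_\infty$ bound on each $\text{dsp}^2$) produces the remaining $m^{3/4}$-type term $\tilde O(n^{1-\xi/2}m^{3/4}/d^{(1+\eta)/2})$.

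\textbf{Where the difficulty lies.} The crux of the argument is bounding $\|E_{\mathrm{off}}\|$: both $\inner{x_i}{x_j}$ and $\text{dsp}(W, S_{x_i x_j})$ are functions of $x_i, x_j$, so the two factors cannot be decoupled via a product of expectations, which is why a single Frobenius estimate is too crude and three separate terms appear in the bound. Those three deviation terms reflect three distinct balances for controlling this coupling — a Bernstein bound on the diagonal contribution, the product of a uniform inner-product bound with the $L_2$ mean of the discrepancy, and a variance-type bound on the empirical sum of squared discrepancies — each requiring a different combination of $L_2$, $L_\infty$, and sphere-tail estimates. The remaining ingredients (Weyl's inequality, Gram-operator concentration for the kernel $g$, and sub-Gaussian inner-product tails on the sphere) are all standard.
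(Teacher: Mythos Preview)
Your approach is essentially the paper's: the same decomposition $D^\top D/n = G + (G_n - G)$, the same lower bound on $\lambda_m(G)$ via matrix Chernoff on the truncated kernel, and the same bound on $\|G_n-G\|$ via a diagonal/off-diagonal split, Cauchy--Schwarz, the sub-Gaussian inner-product bound $\max_{i\ne j}|\langle x_i,x_j\rangle|\lesssim \log d/\sqrt d$, and a U-statistic concentration for $\sum_{i\ne j}\mathrm{dsp}(W,S_{x_ix_j})^2$. The key identity $E_{ij}=n\langle x_i,x_j\rangle\,\mathrm{dsp}(W,S_{x_ix_j})$ is correct and is exactly what the paper uses.

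The one place your bookkeeping goes wrong is the provenance of the term $\tilde O(nm^{1/2}/d^{1/2})$. You attribute it to the diagonal, but the diagonal entries satisfy $\langle x_i,x_i\rangle=1$, so no factor $d^{-1/2}$ can arise there; the diagonal contributes only $\max_i|E_{ii}|\le n\,L_\infty(W)\le 2n$, which is a lower-order term absorbed in the final statement. In the paper (and in any correct version of your argument), \emph{all three} displayed negative terms come from the off-diagonal piece, via the three-term Bernstein inequality for the U-statistic $\tfrac{1}{m(m-1)}\sum_{i\ne j}\mathrm{dsp}(W,S_{x_ix_j})^2$: its mean is $(L_2(W))^2$, its variance is at most $(L_\infty(W)\,L_2(W))^2$, and its range is $L_\infty(W)^2$. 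After taking square roots, multiplying by $n\log d/\sqrt d$, and inserting $L_\infty(W)\le 2$ and $L_2(W)=\tilde O(n^{-\xi}d^{-\eta})$, these three contributions become precisely the terms $\tilde O(n^{1-\xi}m/d^{1/2+\eta})$, $\tilde O(n^{1-\xi/2}m^{3/4}/d^{(1+\eta)/2})$, and $\tilde O(nm^{1/2}/d^{1/2})$. A Hoeffding-type bound (which you list as an alternative) would merge the variance and range contributions into a single weaker term $\tilde O(n m^{3/4}/\sqrt d)$ and would not reproduce the stated bound.
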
 
Here the notation $\tilde\Omega$ hides logarithmic terms $\log d \log \frac{1}{\delta}$.

The theorem is stated in its general form. 
It bounds the smallest singular value in terms of the  $n, d, m$ and two parameters $\xi,\eta$ quantifying how large $L_2(W)$ is. 
It is instructive to consider the special case when $n = \tilde\Omega(m^{\beta})$, $d = \tilde\Omega(m^{\beta})$, and $\xi = \eta = 1/4$, which corresponds to Theorem~\ref{thm:main}).
In this case, with probability at least $1- cm^{-\log m} - \delta $,
$
  s_m(D)^2 \ge c m
$
for some constant $c>0$. It is clear that the singular value is large and bounded away from zero. 

It is interesting to compare the theorem to the results in~\cite{SouCar16}, which shows that $D$ is full rank with probability one under small perturbations. However, full-rankness alone is not sufficient since its smallest singular value could be extremely small leading to possibly huge training loss. Instead, we directly bound the smallest singular value and relate it to the activation and the diversity of the weights. 

\paragraph{Intuition.}
Here we describe the high level intuition for bounding the minimum singular value.
It is necessarily connected to the activation function and the diversity of the weights. For example, if $\sigma'(t)$ is very small for all $t$, then the smallest singular value is expected to be very small. For the weights, if $d < m$ (the interesting case) and all $w_k$'s are the same, then $D$ cannot have rank $m$. If $w_k$'s are very similar to each other, then one would expect the smallest singular value to be very small or even zero. Therefore, some notion of diversity of the weights is needed. 

The analysis begins by considering the matrix $G_n = D^\top D/n$. It suffices to bound $\lambda_m(G_n)$, the $m$-th (and the smallest) eigenvalue of $G_n$. 
To do so, we introduce a matrix $G$ whose entries $G(i,j) = \EE_w [G_n(i,j)]$ where the expectation $\EE_w$ is taken assuming $w_k$'s are uniformly random on the unit sphere. The intuition is that when $w$ is uniformly distributed, $\sigma'(w^\top x)$ is most independent of the actual value of the $x$, and the matrix $D$ should have the highest chance of having large smallest singular value. We will introduce $G$ as an intermediate quantity and subsequently bound the spectral difference between $G_n$ and $G$.
Roughly speaking, we break
the proof into two steps
\begin{align*}
  \lambda_m(G_n) \geq \underbrace{\lambda_m(G)}_{\text{I. ideal spectrum}} - \underbrace{\nbr{G - G_n}}_{\text{II. discrepancy}}
\end{align*}
where $\nbr{G - G_n}$ is the spectral norm of the difference. 

For the first term in the lower bound, we observe that $G$ has a particular nice form: $G(i,j) = g(x_i, x_j)$, the kernel defined in (\ref{eqn:relu_kernel}). This allows us to apply the eigendecomposition of the kernel and positive definite matrix concentration inequality to bound $\lambda_m(G)$, which turns out to be around $m \gamma_m/2$. 

For the second term, when $w_k$'s are indeed from the uniform distribution over the sphere, this can be bounded by concentration bounds. It turns out that when $w_k$'s are not too far away from that, it is still possible to do so. Therefore, we use the geometric discrepancy to measure the diversity of the weights, and show that when they are sufficiently diverse, $\nbr{ G - G_n }$ is small. 
In particular, the entries in $G - G_n$ can be viewed as the kernel of some U-statistics, hence concentration bounds can be applied. The expected U-statistics turns out to be the $(L_2(W))^2$, which has a closed form and can be shown to be small. 

\paragraph{Outline.} 
Theorem~\ref{thm:singular_value} is proved in Section~\ref{sec:singular}, $L_2(W)$ and $\Gcal_W$ are characterized in Section~\ref{sec:l2disc}, and the proof sketch of Theorem~\ref{thm:main} and Theorem~\ref{thm:error} is provided in Section~\ref{sec:error}. 
We describe the proof sketch for the lemmas and provide the remaining proofs in the appendix.

\vspace{-3mm}
\section{Bounding the smallest singular value} \label{sec:singular}
\vspace{-2mm}

Theorem~\ref{thm:singular_value} can be obtained from the following technical lemma.

\begin{lemma} \label{lem:smallest}
With probability $\ge 1 - m\exp\rbr{-m\gamma_m/8} - 2m^2\exp\rbr{-4\log^2 d} - \delta$, we have
\begin{align}
s_m(D)^2 &\geq nm\gamma_m/2  - c n \rho(W),
\end{align}
where
\begin{align}
\rho(W) &= \frac{\log d}{\sqrt{d}}  \sqrt{ L_\infty(W) L_2(W) } m\rbr{\frac{4}{m}\log\frac{1}{\delta}}^{1/4} +  \frac{\log d}{\sqrt{d}} m L_\infty(W)\sqrt{\frac{4}{3m}\log \frac{1}{\delta}} 
+ \frac{\log d}{\sqrt{d}} m L_2(W) + L_\infty(W).
\end{align}
\end{lemma}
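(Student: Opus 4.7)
The plan is to follow the roadmap sketched right after Theorem~\ref{thm:singular_value}. Since $s_m(D)^2 = n\lambda_m(G_n)$ for the $m\times m$ Gram matrix $G_n = D^\top D/n$, it suffices to lower bound $\lambda_m(G_n)$. A direct computation using $v_k^2 = 1$ and $\sigma'(u) = \II\sbr{u>0}$ gives
\[
G_n(i,j) = \frac{\inner{x_i}{x_j}}{n}\sum_{k=1}^n \II\sbr{w_k \in S_{x_i x_j}},
\]
so with $G(i,j):=\EE_w\sbr{G_n(i,j)} = g(x_i,x_j)$, we obtain the key identity
\[
(G-G_n)(i,j) \;=\; -\text{dsp}(W,S_{x_i x_j})\cdot \inner{x_i}{x_j},
\]
which bridges $D^\top D$ with both the activation kernel $g$ and the weight discrepancy. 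Weyl's inequality then gives $\lambda_m(G_n) \ge \lambda_m(G) - \nbr{G-G_n}$, reducing the proof to (I) a lower bound on $\lambda_m(G)$ and (II) an upper bound on $\nbr{G-G_n}$.

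For step (I), I would use the spherical harmonic decomposition $g(x,y) = \sum_u \gamma_u \phi_u(x)\phi_u(y)$. Retaining only the top $m$ eigenfunctions and defining $\Phi \in \RR^{m\times m}$ by $\Phi_{ui} = \phi_u(x_i)$, the tail $g - g_m$ is itself a PSD kernel, so $G \succeq \Phi^\top \mathrm{diag}(\gamma_1,\ldots,\gamma_m)\Phi \succeq \gamma_m\, \Phi^\top\Phi$. Since $\Phi^\top\Phi$ shares nonzero eigenvalues with $\Phi\Phi^\top = \sum_{i=1}^m \phi(x_i)\phi(x_i)^\top$, a sum of $m$ iid rank-one psd matrices with expectation $mI_m$ (by orthonormality of the $\phi_u$ on the sphere), a matrix Chernoff inequality yields $\lambda_m(\Phi^\top\Phi) \ge m/2$ with probability at least $1 - m\exp(-m\gamma_m/8)$, giving $\lambda_m(G) \ge m\gamma_m/2$.

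For step (II), first condition on the event $\abr{\inner{x_i}{x_j}} \le c\log d/\sqrt{d}$ for all $i\ne j$, which holds by concentration of dot products on the sphere combined with a union bound and contributes the $2m^2\exp(-4\log^2 d)$ failure term. Split $G-G_n$ into diagonal and off-diagonal parts. On the diagonal, $\abr{(G-G_n)_{ii}} \le L_\infty(W)$ since $S_{x_i x_i}$ is a hemisphere, yielding an operator-norm contribution of $L_\infty(W)$. For the off-diagonal block, dominate the spectral norm by the Frobenius norm:
\[
\nbr{(G-G_n)_{\text{off}}}^2 \;\le\; \frac{c\log^2 d}{d}\sum_{i\ne j}\text{dsp}(W,S_{x_i x_j})^2.
\]
The crucial observation is that the inner sum is a U-statistic with kernel $h(x,y)=\text{dsp}(W,S_{xy})^2$ whose sup-norm is $L_\infty(W)^2$, whose expectation equals $(L_2(W))^2$ by \eqref{eq:l2_dsp}, and whose variance is at most $L_\infty(W)^2 L_2(W)^2$. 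A Bernstein-type inequality for U-statistics then gives, with probability $\ge 1-\delta$,
\[
\frac{1}{m^2}\sum_{i\ne j}\text{dsp}(W,S_{x_i x_j})^2 \;\le\; L_2(W)^2 + c\, L_\infty(W) L_2(W)\sqrt{\frac{1}{m}\log\frac{1}{\delta}} + c\,\frac{L_\infty(W)^2}{m}\log\frac{1}{\delta}.
\]
Taking square roots via $\sqrt{a+b+c}\le\sqrt{a}+\sqrt{b}+\sqrt{c}$, multiplying by $\log d/\sqrt{d}$, and adding the diagonal contribution $L_\infty(W)$ reproduces exactly the four terms of $\rho(W)$.

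The main obstacle I expect is the U-statistic concentration in step (II). The summands $\text{dsp}(W,S_{x_i x_j})^2$ are not independent across pairs $(i,j)$, so a naive Hoeffding bound would give a suboptimal $L_\infty(W)^2$ dependence and miss the sharp cross term $\sqrt{L_\infty(W) L_2(W)}$; one must invoke a genuine (possibly degenerate) U-statistic Bernstein inequality with the variance computed carefully under the uniform law on the sphere. A secondary subtlety in step (I) is controlling $\nbr{\phi(x)}^2$ so that matrix Chernoff delivers precisely the $\exp(-m\gamma_m/8)$ rate; this should follow from the addition formula for spherical harmonics applied to the top-$m$ block, but it depends on the step structure of the eigenvalue multiplicities described earlier.
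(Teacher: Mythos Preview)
Your proposal is correct and follows essentially the same route as the paper: the decomposition $\lambda_m(G_n) \ge \lambda_m(G) - \nbr{G - G_n}$, matrix Chernoff on the top-$m$ spherical-harmonic block for step~(I), and the inner-product concentration plus U-statistic Bernstein (with $\EE E_{12}^2 = L_2(W)^2$ and $\Sigma^2 \le L_\infty(W)^2 L_2(W)^2$) for step~(II) are exactly Lemmas~\ref{lem:lowerG} and~\ref{lem:g_difference}. The only cosmetic difference is that the paper applies Chernoff directly to $A^\top A$ with $A^i_u = \sqrt{\gamma_u}\,\phi_u(x_i)$, so the norm bound $\nbr{X_i} \le \tr X_i \le g(x_i,x_i)$ is immediate and the subtlety you flag about controlling $\nbr{\phi(x)}^2$ never arises.
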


\begin{proof}[Proof of Theorem~\ref{thm:singular_value}]
First, $\abr{\text{dsp}(W, S)} \le 2$ for any set $W$ and slice $S$, so by definition $\abr{L_\infty(W)} \le 2$. 
Next, By the assumption in the theorem, $L_2(W) = \Otil(n^{-\xi} d^{-\eta})$. Plugging these into Lemma~\ref{lem:smallest} completes the proof.
\end{proof}
Lemma~\ref{lem:smallest} is meaningful only when $cn\rho(W)$ is small compared to $nm\gamma_m/2$. This requires $L_2(W)$ to be sufficiently small. 
In the following we will first provide the proof sketch of Lemma~\ref{lem:smallest}, and then bound that $L_2(W)$ in Section~\ref{sec:l2disc}.




\vspace{-3mm}
\subsection{Proof of Lemma~\ref{lem:smallest} } \label{sec:proof_smallest}
\vspace{-2mm}

To prove Lemma~\ref{lem:smallest}, it is sufficient to bound the smallest eigenvalue of $G_n = D^\top D/n$. 
Note that $v_k \in \cbr{-1, 1}$, so $v_k^2 = 1$, and thus the $(i,j)$-th entry of $G_n$ is
\begin{align}
  G_n(i,j) = \frac{1}{n} \sum_{k=1}^n \sigma^{\prime}(w_k^\top x_i)  \sigma^{\prime}(w_k^\top x_j) \inner{x_i}{x_j} .
\end{align}

For ReLU, $\sigma'(w^\top x)$ does not depend on the norm of $w$ so without loss of generality, we assume $\nbr{w}=1$.
Consider a related matrix $G$ whose $(i,j)$-th entry is defined as
\begin{align} 
  G(i,j) = \EE_{w}\sbr{\sigma^{\prime}(w^\top x_i)  \sigma^{\prime}(w^\top x_j) \inner{x_i}{x_j}} .
\end{align} 
where $w$ is a random variable distributed uniformly over the unit sphere. 
Since $\sigma'(w^\top x) = \II\sbr{w^\top x \ge 0}$, we have a closed form expression for $G(i,j)$:
\begin{align}
G(i,j) &= \EE_{w}\sbr{\II(w^\top x_i \ge 0) \II(w^\top x_j \ge 0)} \inner{ x_i}{ x_j} \nonumber\\
&= \rbr{\frac{1}{2} - \frac{\arccos\inner{x_i}{x_j}}{2\pi}}  \inner{ x_i}{ x_j}.
\end{align}
Note that $G(i,j) = g(x_i, x_j)$ where $g$ is the kernel defined in (\ref{eqn:relu_kernel}). This allows us to reason about the eigenspectrum of $G$, denoted as $\lambda_1(G) \ge \ldots \ge \lambda_m(G)$.

Therefore, our strategy is to first bound $\eigg{m}$ in Lemma~\ref{lem:lowerG} and then bound $|\lambda_m(G) - \lambda_m(G_n)|$ in Lemma~\ref{lem:g_difference}.  Combining the two immediately leads to Lemma~\ref{lem:smallest}.

{\bf First, consider $\lambda_m(G)$.} 
We consider a truncated version of spherical harmonic decomposition:
\[
 g^{[m]}(x_i, x_j) = \sum_{u=1}^m \gamma_u \phi_u(x_i)\phi_u(x_j) 
\]
and the corresponding matrix $G^{[m]}$. On one hand, it is clear that $\lambda_m(G) \ge \lambda_m(G^{[m]})$.
On the other hand, $G^{[m]} = AA^\top$ where $A$ is a random matrix whose rows are
$$
   A^i := [\sqrt{\gamma_1} \phi_1(x_i), \ldots,  \sqrt{\gamma_m} \phi_m(x_i)].
$$ 
Next, we bound $\lambda_m(G^{[m]})$ by matrix Chernoff bound~\cite{Tropp12}, and it is better than previous work~\cite{Braun06}.
This leads to the following lemma. 

\begin{lemma}\label{lem:lowerG}
With probability at least $1-m \exp\rbr{ -m \gamma_m / 8 }$,
$$ 
   \lambda_m(G) \ge m\gamma_m/2.
$$
\end{lemma}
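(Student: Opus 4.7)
The plan is to leverage the factorization $G^{[m]}=AA^\top$ supplied in the excerpt together with the positive semidefiniteness of the kernel $g$, in order to reduce the lower bound on $\lambda_m(G)$ to a matrix Chernoff inequality applied to a sum of independent rank-one random matrices.

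First I would argue that $G\succeq G^{[m]}$ in the Loewner order, which immediately yields $\lambda_m(G)\ge\lambda_m(G^{[m]})$ by Weyl's inequality. This follows because the residual kernel $g(x,y)-g^{[m]}(x,y)=\sum_{u>m}\gamma_u\phi_u(x)\phi_u(y)$ is a convergent sum of rank-one PSD kernels (all $\gamma_u\ge 0$), so its Gram matrix at the sample points is itself PSD. Thus only $\lambda_m(G^{[m]})$ needs to be controlled.

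Next I would pass from $AA^\top$ to $A^\top A$, which have the same nonzero spectrum because $A$ is $m\times m$. Writing rows of $A$ as $A^i=[\sqrt{\gamma_1}\phi_1(x_i),\ldots,\sqrt{\gamma_m}\phi_m(x_i)]$ and setting $X_i:=(A^i)^\top A^i$, I obtain $A^\top A=\sum_{i=1}^m X_i$, a sum of $m$ independent PSD rank-one matrices driven by the i.i.d.\ samples $x_i$. Using orthonormality of spherical harmonics under the uniform measure on the sphere, $\EE[\phi_u(x)\phi_{u'}(x)]=\delta_{uu'}$, so $\EE[X_i]=\mathrm{diag}(\gamma_1,\ldots,\gamma_m)$ and $\lambda_{\min}(\EE[\sum_i X_i])=m\gamma_m$. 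To invoke matrix Chernoff I need a uniform operator-norm bound on $X_i$: since $X_i$ has rank one, $\|X_i\|=\|A^i\|^2=\sum_{u=1}^m\gamma_u\phi_u(x_i)^2\le\sum_{u=1}^\infty\gamma_u\phi_u(x_i)^2=g(x_i,x_i)=1/2$, using the closed form \eqref{eqn:relu_kernel} at $x=y$ on the unit sphere. So I may take $R=1$ as the almost-sure upper bound on $\lambda_{\max}(X_i)$.

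Applying Tropp's lower-tail matrix Chernoff bound with multiplicative slack $1/2$ (and the standard simplification $\delta^2/2$ of the log-mgf for $\delta=1/2$) then gives
\[
\Pr\!\left[\lambda_{\min}\!\Big(\sum_{i=1}^m X_i\Big)\le \tfrac{1}{2}\, m\gamma_m\right]\le m\exp\!\left(-\frac{m\gamma_m}{8}\right),
\]
which combined with the two reductions above proves $\lambda_m(G)\ge m\gamma_m/2$ with the stated probability. The step most likely to need care is the spherical-harmonic bookkeeping in computing $\EE[X_i]$: one must ensure that the normalization of the $\phi_u$ used in the truncation $g^{[m]}$ agrees with orthonormality in $L^2$ of the uniform measure, so that the cross terms vanish exactly and the diagonal picks up $\gamma_u$ with coefficient $1$. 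Once this is verified, the remaining ingredients — the Loewner monotonicity, the uniform norm bound from $g(x,x)=1/2$, and the standard matrix Chernoff exponent $\mu_{\min}/(8R)$ — fit together to deliver precisely the probability $1-m\exp(-m\gamma_m/8)$ claimed in the lemma.
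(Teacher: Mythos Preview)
Your proposal is correct and follows essentially the same route as the paper: reduce to $\lambda_m(G^{[m]})$ via PSDness of the tail $G-G^{[m]}$, write $G^{[m]}=AA^\top$, pass to $A^\top A=\sum_i X_i$ with $X_i=(A^i)^\top A^i$, compute $\EE[X_i]=\mathrm{diag}(\gamma_1,\ldots,\gamma_m)$ from orthonormality, bound $\|X_i\|\le g(x_i,x_i)$, and apply Tropp's matrix Chernoff with slack $1/2$. The only cosmetic difference is that the paper records the norm bound as $c_g=\max_x g(x,x)$ (yielding the slightly sharper exponent $-m\gamma_m/(8c_g)$ in an intermediate lemma) whereas you plug in $g(x,x)=1/2\le 1$ and take $R=1$ to land exactly on the stated $-m\gamma_m/8$.
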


{\bf Next, bound $|\lambda_m(G) - \lambda_m(G_n)|$.} By Weyl's theorem, this is bounded by $\nbr{G - G_n}$. 
To simplify the notation, denote
\begin{align*}
E_{i,j}  =  & ~ \EE_w[\sigma^\prime(w^\top x_i) \sigma^\prime(w^\top x_j)] ~ - \frac{1}{n}\sum_{k=1}^n \sigma^\prime(w_k^\top x_i) \sigma^\prime(w_k^\top x_j).
\end{align*}
Then $G(i,j) - G_n(i,j) = \inner{x_i}{x_j} E_{ij}$, and thus 
\begin{align}
& \quad \nbr{G - G_n} \nonumber \\
& = \sup_{\nbr{\alpha}=1} \sum_{i,j} \alpha_i \alpha_j  \inner{x_i}{x_j} E_{ij}  \nonumber\\
& \le \sup_{\nbr{\alpha} = 1}  \sqrt{\sum_{i\neq j} \alpha_i^2 \alpha_j^2} \sqrt{\sum_{i\neq j} \inner{x_i}{x_j}^2 E_{ij}^2} + \max_i \abr{E_{ii}} \nonumber\\
&\le  \frac{c\log d}{ \sqrt{d}} \sqrt{\sum_{i\neq j}  E_{ij}^2} + \max_i \abr{E_{ii}}  
  \label{eqn:bound_diff}
\end{align}
where the last inequality holds with high probability since $x_i$'s are uniform over the unit sphere and thus we can apply sub-gaussian concentration bounds.  

Note that $\sum_{i\neq j}  E_{ij}^2 / (m(m-1))$ is a U-statistics where the summands are dependent and typical concentration inequality for~\iid~entries does not apply. Instead we use a Bernstein inequality for U-statistics~\cite{PeeAntRal10} to show that 
with probability at least $1- \delta$, it is bounded by
\begin{align} \label{eqn:eij}
\EE_{\cbr{x_1,x_2}} E_{12}^2 + \sqrt{\frac{4\Sigma^2}{m}\log\frac{1}{\delta}} 
  + \frac{4B^2}{3m}\log \frac{1}{\delta} 
\end{align}
where $B = \max_i |E_{ii}|$ and $\Sigma^2 = \EE\sbr{E_{12}^4} - (\EE\sbr{E_{12}^2})^2$.

The key observation is that the quantities in the above lemma are related to discrepancy:
\begin{align}
\max_{i,j} E_{ij}  &\le  L_\infty(W), \label{eqn:eij1}\\
\EE_{x_1,x_2}\sbr{E_{12}^2} &= (L_2(W))^2, \label{eqn:eij2}\\
\Sigma^2 & \le (L_2(W) L_\infty(W))^2. \label{eqn:eij3}
\end{align}
This is because 
$
  \sigma^\prime(w^\top x_i) \sigma^\prime(w^\top x_j) = \II[w \in S_{x_i x_j}]
$
and thus 
\begin{align*}
E_{i,j} & = \EE_w \II[w \in S_{x_i x_j}]  - \frac{1}{n}\sum_{k=1}^n \II[w_k \in S_{x_i x_j}] \\
& = A(S_{x_i x_j}) - \frac{1}{n} \abr{W \cap S_{x_i x_j}}\\
& = -\text{dsp}(W, S_{x_i x_j}).
\end{align*}

Plugging (\ref{eqn:eij1})-(\ref{eqn:eij3}) into (\ref{eqn:eij}) and (\ref{eqn:bound_diff}), we have
\begin{lemma}\label{lem:g_difference}
The following inequality holds with probability at least $1 - 2m^2\exp\rbr{-\log^2 d} -\delta$, 
\begin{align}
\nbr{G_n - G} &\le c \rho(W)
\end{align}
where $\rho(W)$ is as defined in Lemma~\ref{lem:smallest}. 
\end{lemma}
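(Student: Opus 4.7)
}

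The plan is to substitute the discrepancy identifications (\ref{eqn:eij1})--(\ref{eqn:eij3}) into the operator-norm decomposition (\ref{eqn:bound_diff}), after controlling the off-diagonal quadratic form by a U-statistic Bernstein bound. Concretely, I start from
\[
  \nbr{G - G_n} \;\le\; \frac{c\log d}{\sqrt{d}}\sqrt{\sum_{i\neq j} E_{ij}^2} \;+\; \max_i |E_{ii}|,
\]
which is justified by separating the diagonal from the off-diagonal and using that for $x_i$ uniform on the sphere, $|\inner{x_i}{x_j}| \le c\log d/\sqrt{d}$ simultaneously for all $i\neq j$ with probability at least $1-2m^2\exp(-\log^2 d)$ via a standard sub-gaussian tail bound on a single coordinate of a uniform vector combined with a union bound over the $\binom{m}{2}$ pairs. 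The diagonal max is then handled immediately by (\ref{eqn:eij1}), which gives $\max_i|E_{ii}| \le L_\infty(W)$; this produces the trailing $L_\infty(W)$ term in $\rho(W)$.

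The main obstacle is the off-diagonal sum, because $\{E_{ij}\}_{i<j}$ is \emph{not} an i.i.d.\ family: each $x_\ell$ appears in $m-1$ pairs, so $E_{ij}^2$ and $E_{ik}^2$ share the randomness of $x_i$. I will treat $\frac{1}{m(m-1)}\sum_{i\neq j} E_{ij}^2$ as a second-order U-statistic in the sample $\{x_\ell\}_{\ell=1}^m$, with symmetric kernel $h(x_i,x_j) = E_{ij}^2$ (the weights $W$ are fixed, so the randomness comes only from the $x_\ell$'s). Applying the Bernstein-type inequality for U-statistics from \cite{PeeAntRal10} then gives, with probability $\ge 1-\delta$,
\[
  \frac{1}{m(m-1)}\sum_{i\neq j} E_{ij}^2 \;\le\; \EE_{x_1,x_2} E_{12}^2 + \sqrt{\frac{4\Sigma^2}{m}\log\frac{1}{\delta}} + \frac{4B^2}{3m}\log\frac{1}{\delta},
\]
where $B = \sup h \le \max_{i,j} E_{ij}^2$ and $\Sigma^2 = \Var(h)\le \EE[E_{12}^4]$. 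Multiplying by $m(m-1)\le m^2$ and taking the square root, the three terms on the right become, respectively, $mL_2(W)$, $\sqrt{m}\,\sqrt{L_2(W)L_\infty(W)}\cdot(\frac{4}{m}\log\tfrac{1}{\delta})^{1/4}$ after using $\Sigma^2 \le (L_2(W)L_\infty(W))^2$, and $\sqrt{m}\,L_\infty(W)\sqrt{\tfrac{4}{3m}\log\tfrac{1}{\delta}}$; here I use (\ref{eqn:eij2}) and the bound $\EE[E_{12}^4] \le L_\infty(W)^2 \cdot \EE[E_{12}^2] = (L_\infty(W) L_2(W))^2$, which follows from $|E_{12}|\le L_\infty(W)$ pointwise.

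Multiplying each of these three contributions by the prefactor $\log d/\sqrt{d}$ from the Cauchy--Schwarz step, and adding the diagonal $L_\infty(W)$ term, yields exactly the four summands of $\rho(W)$ as defined in Lemma~\ref{lem:smallest}. The probability budget is $1 - 2m^2\exp(-\log^2 d)$ for the inner-product concentration plus $\delta$ for the U-statistic tail, giving the stated $1 - 2m^2\exp(-\log^2 d) - \delta$. The only mildly delicate checks I expect to need are (i) verifying the kernel bound $B$ and variance proxy $\Sigma^2$ in terms of $L_\infty(W)$ and $L_2(W)$, which reduce to pointwise inequalities on $|E_{ij}|$, and (ii) confirming that the union-bound prefactor $m^2$ in the off-diagonal concentration is absorbed inside the $\exp(-\log^2 d)$ tail with room to spare.
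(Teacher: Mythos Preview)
Your proposal is correct and follows essentially the same route as the paper: separate diagonal from off-diagonal, use the sub-gaussian inner-product bound with a union bound over pairs to extract the $\log d/\sqrt{d}$ factor, apply the U-statistic Bernstein inequality of \cite{PeeAntRal10} to $\frac{1}{m(m-1)}\sum_{i\neq j}E_{ij}^2$, and then substitute the discrepancy identifications (\ref{eqn:eij1})--(\ref{eqn:eij3}). One small arithmetic slip: after multiplying by $m(m-1)\le m^2$ and taking the square root, the prefactor on the second and third terms is $m$, not $\sqrt{m}$ (since $\sqrt{m^2\cdot(\cdot)}=m\sqrt{(\cdot)}$), which is what is needed to match the $m$-scaling in the definition of $\rho(W)$ in Lemma~\ref{lem:smallest}.
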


\vspace{-3mm}
\subsection{Characterizing the discrepancy} \label{sec:l2disc}
\vspace{-2mm}

In this subsection, we present a bound for $L_2(W)$ and show that the $\Gcal_W$ defined in the following covers most $W$'s. 
Recall that
\begin{align*}
	\Gcal_W = \cbr{W: (L_2(W))^2 \le c_g \rbr{\sqrt{\frac{\log d}{nd} \log \frac{1}{\delta'}} + \frac{1}{n} \log \frac{1}{\delta'}} }
\end{align*}
for $0 < \delta' < 1$ and a proper constant $c_g > 0$. The constant $c_g$ is the constant in Lemma~\ref{lem:l2discrepancy}. $\delta'$ will be clear from the context where $\Gcal_W$ is used.

First we provide a closed form for $L_2$ discrepancy of slices defined in~(\ref{eqn:slice}). The proof is provided in the appendix.
\begin{restatable}{theorem}{stolarsky} \label{thm:stolarsky}
Suppose $W = \cbr{w_i}_{i=1}^n \subseteq \mathbb{S}^{d-1}$.
\begin{align*}
        \rbr{L_2(W)}^2 & = \frac{1}{n^2} \sum_{i,j = 1}^n k(w_i,w_j)^2  - \EE_{u,v} \sbr{ k(u,v)^2 }
\end{align*}
where $\EE_{u,v}$ is over $u$ and $v$ uniformly distributed on $\mathbb{S}^{d-1}$ and the kernel $k(\cdot,\cdot)$ is
\[
  k(u,v) = \frac{\pi - \arccos \inner{u}{v} }{2 \pi}.
\]
\end{restatable}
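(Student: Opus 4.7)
The plan is to unfold the square in the definition of $L_2(W)^2$ and evaluate the resulting expectations using a Fubini-style exchange that swaps the role of the $w_k$'s (the sample points) with the role of $(x,y)$ (which parameterize the test sets). The key observation is that the kernel $k(u,v)$ arises naturally as the normalized area of the slice $S_{uv}$, and this symmetry in its two arguments lets us rewrite the probability that a sample point lies in a random slice as the probability that a random pair $(x,y)$ activates a given sample point.

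First I would verify the area formula $A(S_{xy}) = k(x,y)$. Since $w$ is uniform on $\mathbb{S}^{d-1}$, the probability $\Pr_w[w^\top x \ge 0, w^\top y \ge 0]$ depends only on the angle $\arccos\inner{x}{y}$ between $x$ and $y$, and a standard two-dimensional reduction (projecting onto the plane spanned by $x,y$) gives $A(S_{xy}) = \frac{\pi - \arccos\inner{x}{y}}{2\pi} = k(x,y)$. Hence
\begin{align*}
 \text{dsp}(W, S_{xy}) = \frac{1}{n}\sum_k \II[w_k \in S_{xy}] - k(x,y).
\end{align*}

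Next I would expand $(L_2(W))^2 = \EE_{x,y}[\text{dsp}(W, S_{xy})^2]$ into the three terms
\begin{align*}
 \frac{1}{n^2}\sum_{i,j} \EE_{x,y}\bigl[\II[w_i \in S_{xy}]\,\II[w_j \in S_{xy}]\bigr]
 - \frac{2}{n}\sum_k \EE_{x,y}\bigl[k(x,y)\,\II[w_k \in S_{xy}]\bigr]
 + \EE_{x,y}[k(x,y)^2].
\end{align*}
For the first term, I use independence of $x$ and $y$ to factor
\begin{align*}
 \EE_{x,y}\bigl[\II[w_i \in S_{xy}]\,\II[w_j \in S_{xy}]\bigr]
 = \Pr_x[w_i^\top x \ge 0,\, w_j^\top x \ge 0]\cdot \Pr_y[w_i^\top y \ge 0,\, w_j^\top y \ge 0]
 = k(w_i,w_j)^2,
\end{align*}
where the final equality again uses the two-dimensional reduction, applied now with the roles of sample and test swapped. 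For the cross term, I rewrite $k(x,y) = \EE_u[\II[u \in S_{xy}]]$ and apply Fubini, obtaining $\EE_u \EE_{x,y}[\II[u\in S_{xy}]\II[w_k \in S_{xy}]] = \EE_u[k(u,w_k)^2]$ by the same factorization.

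At this point everything is in terms of the kernel $k$, and the only remaining step is to show that $\EE_u[k(u,w_k)^2]$ does not depend on $w_k$ and equals $\EE_{u,v}[k(u,v)^2]$. This follows from the rotational invariance of the uniform measure on $\mathbb{S}^{d-1}$: any orthogonal map $O$ with $O w_k = w_k'$ leaves the uniform distribution of $u$ invariant and preserves inner products, hence preserves $k$. Consequently the middle term collapses to $-\frac{2}{n}\cdot n \cdot \EE_{u,v}[k(u,v)^2] = -2\EE_{u,v}[k(u,v)^2]$, which combines with the third term $+\EE_{u,v}[k(u,v)^2]$ to leave $-\EE_{u,v}[k(u,v)^2]$, yielding the claimed identity. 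None of the steps looks genuinely hard; the only place one must be careful is the Fubini exchange in the cross term and the correct identification of the area function, both of which are elementary once the $(x,y)\leftrightarrow(w_i,w_j)$ duality through $k$ is noticed. This is essentially a form of Stolarsky's invariance principle specialized to spherical cap intersections.
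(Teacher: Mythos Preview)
Your proposal is correct and follows essentially the same route as the paper: expand the square in $\EE_{x,y}[\text{dsp}(W,S_{xy})^2]$, factor the quadratic term via independence of $x$ and $y$ together with the duality $\Pr_x[w_i^\top x\ge 0,\,w_j^\top x\ge 0]=k(w_i,w_j)$, and reduce the cross term to $\EE_{u,v}[k(u,v)^2]$ by a Fubini/invariance argument. The paper carries this out through the representation $\II[z\in S_{xy}]=\tfrac{1}{4}(\sgn(x^\top z)+1)(\sgn(y^\top z)+1)$ and the sign-integral identities of Lemma~\ref{lem:d_sign}, while you invoke the area formula and rotational invariance directly, but the underlying computation is the same.
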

The closed form is simple and intuitive. The kernel $k(w_i, w_j)$ measures how similar two units are. The discrepancy is the difference between the average pairwise similarity and the expected one over uniform distribution. 

Given the theorem, we now show that $\rbr{L_2(W)}^2$ can be small. We use the probabilistic method, \ie, show that if $w_k$'s are sampled from $\mathbb{S}^{d-1}$ uniformly at random, then with high probability $W$ falls into $\Gcal_W$. The key observation is that with random $W$, $(L_2(W))^2$ is the difference between a U-statistics and its expectation, which can be bounded by concentration inequalities. Formally,

\begin{restatable}{lemma}{discrepancybound} \label{lem:l2discrepancy}
There exists a constant $c_g$, such that for any $0 < \delta' < 1$, with probability at least $1-\delta'$ over $W = \cbr{w_i}_{i=1}^n$ that are sampled from the unit sphere uniformly at random, 
\begin{align*}
  \rbr{L_2(W)}^2 \le c_g\rbr{\sqrt{\frac{\log d}{nd} \log \frac{1}{\delta'}} + \frac{1}{n} \log \frac{1}{\delta'}}.
\end{align*}
\end{restatable}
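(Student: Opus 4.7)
}

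The plan is to use Theorem~\ref{thm:stolarsky} to rewrite $(L_2(W))^2$ as the deviation of a second-order $U$-statistic from its mean, and then apply the same Bernstein inequality for $U$-statistics already used to establish~(\ref{eqn:eij}).

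First, I would apply Theorem~\ref{thm:stolarsky} with $h(u,v) := k(u,v)^2$, yielding
\[
(L_2(W))^2 \;=\; \frac{1}{n^2}\sum_{i,j=1}^n h(w_i,w_j) \;-\; \EE_{u,v}[h(u,v)].
\]
Since $k(w_i,w_i) = 1/2$, separating off the $n$ diagonal terms gives
\[
(L_2(W))^2 \;=\; \frac{n-1}{n}\bigl(U_n - \EE[h]\bigr) \;+\; \frac{1}{n}\Bigl(\tfrac14 - \EE[h]\Bigr),
\]
where $U_n := \frac{1}{n(n-1)}\sum_{i\ne j} h(w_i,w_j)$ is an unbiased $U$-statistic for $\EE[h]$. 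The second summand is deterministic and of order $1/n$, already matching the additive $(1/n)\log(1/\delta')$ term in the target bound.

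Next, I would control the supremum and variance of the kernel $h$. Since $k(u,v)\in[0,1/2]$, we have $\|h\|_\infty \le 1/4$, so the Bernstein boundedness parameter is $M = O(1)$. For the variance, observe that $h(u,v)$ depends on $(u,v)$ only through $t := \inner{u}{v}$, and the map $t \mapsto ((\pi-\arccos t)/(2\pi))^2$ is $O(1)$-Lipschitz on every interval bounded away from $\pm 1$. When $u,v$ are independent and uniform on $\mathbb{S}^{d-1}$, $t$ has mean $0$, variance $1/d$, and sub-Gaussian tails $\Pr[|t|\ge s]\le 2\exp(-cds^2)$. Combining the Lipschitz estimate on the typical region with the tail bound on the rare region where $|t|$ approaches $1$ gives $\mathrm{Var}(h(u,v)) = \Otil(1/d)$, the logarithmic factor absorbing the tail correction.

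Finally, I would invoke the Bernstein inequality for $U$-statistics of order two (the inequality cited right after~(\ref{eqn:eij})) applied to $U_n - \EE[h]$: with probability at least $1-\delta'$,
\[
|U_n - \EE[h]| \;\le\; c\left(\sqrt{\tfrac{\sigma^2}{n}\log\tfrac{1}{\delta'}} \;+\; \tfrac{M}{n}\log\tfrac{1}{\delta'}\right),
\]
with $\sigma^2 = \Otil(1/d)$ and $M = O(1)$. Substituting into the decomposition from the first step and collecting constants produces exactly the two summands in the claimed bound. The main obstacle is the variance bound: the range-based estimate $\mathrm{Var}(h)\le 1/16$ is too weak to yield any $d$-dependence, and recovering the $\Otil(1/d)$ scaling requires exploiting the concentration of $\arccos\inner{u}{v}$ at rate $1/\sqrt d$ around $\pi/2$ together with careful control of the Lipschitz behaviour near $t=\pm 1$, which is where the $\log d$ factor in the final statement is introduced.
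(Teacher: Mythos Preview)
Your proposal is correct and follows essentially the same route as the paper: invoke Theorem~\ref{thm:stolarsky}, separate the diagonal, recognize the off-diagonal part as a second-order $U$-statistic, bound its variance by $\Otil(1/d)$ via the concentration of $\inner{u}{v}$ around $0$, and apply the Bernstein inequality for $U$-statistics. The only cosmetic difference is that the paper first writes $k(u,v)=\tfrac{1}{2}\bigl(\tfrac{1}{2}+(\tfrac{1}{2}-d(u,v))\bigr)$ and splits $k^2$ into a linear and a quadratic part in the centered variable $\tfrac{1}{2}-d(u,v)$, bounding each separately (the linear part carries the dominant $\sqrt{\log d/(nd)}$ contribution); you instead bound $\mathrm{Var}(k^2)$ directly, which yields the same $\Otil(1/d)$ and hence the same final estimate.
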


Alternatively, the theorem means that $\Gcal_W$ defined in (\ref{eqn:gw}) covers most $W$'s. This is because $L_2(W)$ is independent of the length of $w_k$'s, it is sufficient to show that $L_2(W)$ is small for $W \in \Gcal_W \cap \mathbb{S}^{d-1}$.

%
%

\vspace{-3mm}
\section{Final bound on generalization error} \label{sec:error}
\vspace{-2mm}

Here we provide the proof sketch of Theorem~\ref{thm:error} and Theorem~\ref{thm:main}.
More details of the proof are in Appendix~\ref{sec:rademacher}.

First, we prove Theorem~\ref{thm:error}. Suppose a solution $W$ satisfies the assumption and has small gradient $\nbr{\partial L /\partial W}$. 
Using (\ref{eqn:residual_bound}), we have
$
\nbr{r} \le {\nbr{\partial L/\partial W}}/{s_m(D)}.
$
By Theorem~\ref{thm:singular_value} and the assumption in Theorem~\ref{thm:error}, with high probability $s_m^2(D) = \Omega(nm^{1-\beta})$. This implies the training loss is 
\begin{align*}
\frac{1}{m}\sum_{l} \rbr{f(x_l) - y_l}^2  = m\nbr{r}^2 
\le \frac{c m^\beta}{n} \nbr{\frac{\partial L}{\partial W}}^2.
\end{align*}

The generalization error can then be derived using McDiamid's inequality and Rademacher complexity.
First, we need an upper bound on the difference of the loss for two data points for the McDiamid's inequality. Since $\nbr{x}_2 \le 1$ and $\sum_k\nbr{w_k}_2 \le C_W$, we have $\abr{f} \le C_W$. Thus
\begin{align*}
& \quad \abr{ \ell(y, f(x)) - \ell(y', f(x')) } \\
& \le \frac{1}{2}\max\cbr{ (y - f(x))^2 , (y'- f(x'))^2} \le Y^2 + C_W^2 
\end{align*}
where in the last inequality we use the fact that the true function $|y| \le Y$.
Next, we use the composition rules to compute the Rademacher complexity. Since the complexity of linear functions $\cbr{w^\top x : \nbr{w}_2\le b_W, \nbr{x}_2\le1}$ is $b_W / \sqrt{m}$ and $\sigma(\cdot)$ is 1-Lipschitz, and $\sum_k \nbr{w_k}_2 \le C_W$, the complexity 
$
\Rcal_m(\Fcal) \le C_W/\sqrt{m}.
$ 
Composing it with the loss function, and applying the bound in~\cite{BarMen02}, we get the final generalization bound.

To obtain Theorem~\ref{thm:main} from the more general Theorem~\ref{thm:error}, we first note that Lemma~\ref{lem:l2discrepancy} proves most $W$ falls into $\Gcal_W$, and setting $n = \Otil(m^\beta)$ and $d = \Otil(m^\beta)$ makes sure that any $W \in \Gcal_W$ has $L_2(W) = \Otil(n^{-1/4}d^{-1/4})$ satisfying the condition (\ref{eqn:thm:error}).  

\section{Discussions}
In this section, we discuss and remark on further considerations and possible extensions of our current analysis. 

\subsection{Other loss functions}
Currently, our analysis is tied to the least squares loss $\ell(y, f(x)) = \frac{1}{2}\rbr{y - f(x)}^2$. It is fairly straightforward to generalize it to any strongly convex loss function, such as logistic loss. Note that the final objective function is \emph{not} convex due to the non-convexity in neural networks, but most loss functions used in practice are strongly convex w.r.t. $f(x)$.
Under the new setting, the residual is then
$$r = \frac{1}{m}\rbr{\ell'(y_1, f(x_1)), \cdots, \ell'(y_m, f(x_m))}^\top.$$
According to our analysis, the norm of the residual $\nbr{r}$ will be bounded. This in turn implies each individual $\ell'(y_l, f(x_l))$ will be small. Since the loss function $\ell(y, f(x))$ is strongly convex, the loss itself will be small.

\subsection{Other activation functions}
We can consider a family of activation functions of the form $\sigma(u) = \max\cbr{u, 0}^t$, \ie, rectified polynomials~\cite{ChoSaul09,KroHop16}. This requires two modifications to the analysis.

One is the corresponding kernel $k(x,y) = \EE_w\sbr{\sigma'(w^\top x)\sigma'(w^\top y)}$ and $g(x,y) = k(x,y)\inner{x}{y}$. When the input distribution is uniform, we can also compute the kernels in closed form as shown in~\cite{ChoSaul09}:
\begin{align}\label{eqn:rectified_poly}
k_t(x, y) = \frac{J_{t-1}(\theta)}{2\pi}
\end{align}
where
\begin{align}
J_t(\theta) = (-1)^t (\sin \theta)^{2t+1}\rbr{\frac{1}{\sin\theta}\frac{\partial}{\partial \theta}}^t \rbr{\frac{\pi-\theta}{\sin\theta}},
\end{align}
and $\theta = \arccos\inner{x}{y}$. Note that the subscript is $t-1$ in (\ref{eqn:rectified_poly}) because we are computing on the derivative $\sigma'(u)$.

Examples for the first few $t$ are listed as follows.
\begin{align}
J_0(\theta) &= \pi - \theta \\
J_1(\theta) &= \sin\theta + (\pi -\theta) \cos\theta \\
J_2(\theta) &= 3\sin\theta\cos\theta + (\pi -\theta)(1+ 2\cos^2\theta)
\end{align}
Larger $t$ corresponds to more nonlinear activation functions and leads to slower decaying spectrum since there are more high frequency components.

We also need to change the definition of the discrepancy to accommodate the new kernels. Let
\begin{align}
\rbr{L_2(W)}^2 &= \EE_{x_i, x_j}\sbr{\EE_w[\sigma^\prime(w^\top x_i) \sigma^\prime(w^\top x_j)] ~ - \frac{1}{n}\sum_{k=1}^n \sigma^\prime(w_k^\top x_i) \sigma^\prime(w_k^\top x_j) }^2    \nonumber\\
&= \frac{1}{n^2} \sum_{i,j = 1}^n k(w_i,w_j)^2  - \EE_{u,v} \sbr{ k(u,v)^2 }.
\end{align}
Therefore, the discrepancy is affected by how the kernels change due to change in activation functions.

The other modification is on the Rademacher complexity. Since the derivative $\sigma'(u) = t \max\cbr{u,0}^{t-1}$,
there is an additional factor of $t$ in front of the complexity. That is, larger $t$ leads to higher Rademacher complexity.

In summary, the best parameter $t$ depends on the balance between the two conflicting effects. On one hand, larger $t$ corresponds to slower decaying spectrum and makes the minimum singular value more likely to be larger. On the other hand, smaller $t$ leads to better generalization since the Rademacher complexity is smaller.

\subsection{(Sub)gradient of the activation function}
Throughout this paper, we have used one particular subgradient for the ReLU activation function: $\II\sbr{u > 0}$. At the point $u=0$, there are many other valid subgradients as long as its value is between 0 and 1. However, our analysis is not restricted to this particular subgradient. First of all, all the subgradients only differ at one point $u=0$, which is of probability zero. Second, our analysis is probabilistic in nature. The first term in Lemma~\ref{lem:smallest} is the expectation over $W$, which is insensitive to the probability zero event $u=0$. The second term in Lemma~\ref{lem:smallest} is related to $L_2(W)$, which is again expectation over all possible data, thus insensitive to the difference.

In summary, though for some $W \in \Gcal_W$ the loss is not differentiable, one can define $\partial L /\partial W$ by using subgradients of ReLU $\sigma$ as follows:
\begin{align} \label{def:relu_subg}
  \sigma'(x) = 
	\begin{cases}
	  0,  & x < 0 \\
		c,  & x = 0 \\
		1,  & x > 0
	\end{cases} 
\end{align}
for any $c \in [0,1]$. 
Then under the conditions in our theorems, with high probability, for any $W \in \Gcal_W$ and any definition of $\sigma'$ in~(\ref{def:relu_subg}), the guarantees hold. 

Other activation functions such as rectified polynomials are differentiable and thus they do not have such issue.

\subsection{Other input distribution}
When the input distribution is not uniform, the spectrum of the kernel function defined in (\ref{eqn:gxy}) will be different because the spherical harmonic bases are defined with respect to the input distribution.
To ensure the spectrum decays slowly, we need to find a corresponding distribution of $W$ that ``matches'' the input distribution.

We provide some intuitions in finding such ``matching'' distribution. Suppose the input distribution is uniform on the set $E \in \mathbb{S}^{d-1}$, if a hyperplane whose normal is $w$ does not ``cut through'' the set, then for all data points, they have the same sign $\II[w^\top x > 0]$. This will likely lead to rank deficiency in the extended feature matrix.

Therefore,  we prefer $W$ to split the data points as much as possible. One such distribution of $W$ is uniform on the set $F_E = \cbr{w \in \mathbb{S}^{d-1} : \textnormal{there exists}~~u \in E, ~ \inner{u}{w} = 0}$. For example, if $E$ is the intersection of the positive orthant and the unit sphere, $E = \cbr{u \in \mathbb{S}^{d-1} : u_i \ge 0,~\textnormal{for all}~~i \in [d]}$, then the corresponding set $F_E$ is the whole sphere excluding $E$ and $-E$. 

We have verified the phenomenon empirically. We have generated 3000 input data points uniform on the positive orthant $E$. We then compute the $3000 \times 3000$ Gram matrix, where the $(i,j)$-th entry is $\EE_w\sbr{\sigma'(w^\top x_i)\sigma'(w^\top x_i)\inner{x_i}{x_j}}$. The expectation is approximated by sampling 100,000 independent $w$'s and then averaging. We compare two distributions of $W$: 1) uniform on the whole unit sphere; 2) uniform on $F_E$. 

\begin{table}
\centering
\caption{Comparison of minimum eigenvalues with uniform and ``matching'' distributions. Note that the ``matching'' distribution corresponds to larger minimum eigenvalue for different dimensions. However, the difference becomes negligible when the dimension increases.}\label{tbl:dim_distribution}
\begin{tabular}{ccccc}
\toprule
$d$ & 4 & 5 & 6 &7 \\
\midrule
uniform & $3.96 \times 10^{-4}$ & $0.0015$ & $0.0032$ & $0.0072$ \\
\midrule
matching & $\bf{5.43 \times 10^{-4}}$ & $\bf{0.0017}$ & $0.0032$ & $0.0072$ \\
\bottomrule
\end{tabular}
\end{table}

In Table~\ref{tbl:dim_distribution}, we compare the minimum eigenvalues with the two distributions. The uniform distribution on $F_E$ always leads to larger or the same minimum eigenvalues. However, as dimension increases, the difference becomes negligible.
Note that the difference between the uniform distribution on the whole sphere and uniform on $F_E$ becomes exponentially small when the dimension $d$ increases, because the proportion of $E$ and $-E$ shrinks exponentially. This suggests that in high dimensions, uniform on the whole unit sphere is a reasonable distribution for $W$.

For a general input distribution $P(x)$, we can decompose it into small sets $dx$ and on every set, the distribution is uniform with measure $P(x)dx$. Then every small sets corresponds to a distribution of $W$. The final distribution of $W$ is the superposition of all such distributions, weighted by $P(x)dx$.

\vspace{-3mm}
\section{Numerical evaluation}
\vspace{-2mm}

In this section, we further investigate numerically the effects of gradient descent on the discrepancy and the effects of regularizing the weights using discrepancy measure.

\vspace{-3mm}
\subsection{Discrepancy and gradient descent}
\vspace{-2mm}

One limitation of the analysis is that we have not analyzed how to obtain a solution $W \in \Gcal_W$ with small gradient. The theoretical analysis of gradient descent is left for future work. Meanwhile we provide some numerical results supporting our claims. 

Although the set $\Gcal_W$ contains most $W$'s, it is still unclear whether the solutions given by gradient descent lie in the set. We design experiments to investigate this issue. The ground truth input data are of dimension $d=50$ and true function consists of $n=50$ units. We use networks of different $n$ to learn the true function and perform SGD with batch size 100 and learning rate 0.1 for 5000 iterations. Figure~\ref{fig:discrepancy} shows how $\rbr{L_2(W)}^2$ changes as a function of $n$. It is slightly worse than $O(n^{-1})$ but scales better than $O(n^{-1/2})$, suggesting (stochastic) gradient descent outputs solutions with reasonable discrepancy.

\begin{figure}
\centering
\begin{minipage}[b]{0.48\textwidth}
\centering
\includegraphics[width=0.8\textwidth]{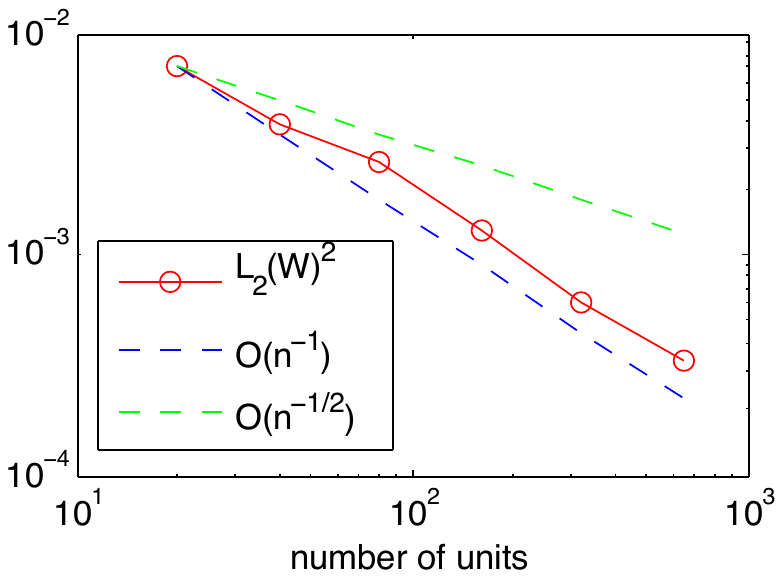}
\caption{Discrepancy of $W$ obtained after gradient descent. We perform gradient descent and compute discrepancy for the returned solution. The red curve corresponds to such solutions with different $n$. It scales similarly to the bound for uniform $W$ as in Lemma~\ref{lem:l2discrepancy}.}
\label{fig:discrepancy}
\end{minipage}
\hfill
\begin{minipage}[b]{0.48\textwidth}
\centering
\includegraphics[width=0.8\textwidth]{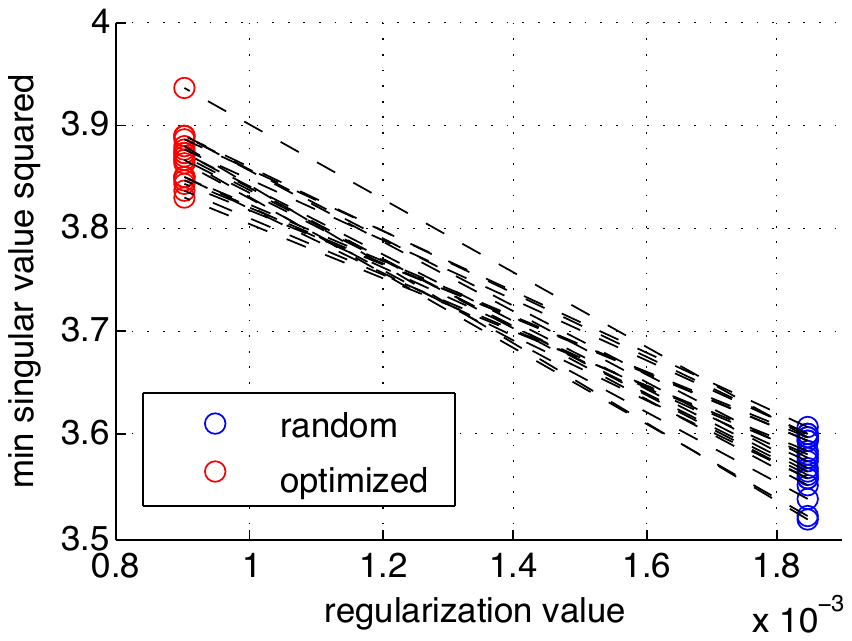}
\caption{Effect of regularization. The blue dots represent random weights and the red dots linked with dashed black lines represent weights optimized by minimizing $R(w)$. Smaller regularization values correspond to larger minimal singular values.}\label{fig:regularization}
\end{minipage}
\end{figure}


\vspace{-3mm}
\subsection{Regularization}
\vspace{-2mm}

To reinforce solutions with small discrepancy, we propose a novel regularization term to minimize $L_2$ discrepancy:
\begin{align}
R(W) &= \frac{1}{n(n-1)} \sum_{i \neq j}^n k(w_i,w_j)^2.
\end{align}
It is essentially $L_2$ discrepancy without the constants.

To verify the effectiveness of the regularization term, we explore the relationship between the regularization and the minimum singular value. We first generate 20 random $W$'s, all with $n=100$ and $d=100$, and compute their discrepancy and singular values using $m=3000$. Then we optimize $R(W)$ and compare the quantities after optimization. The result is presented in Figure~\ref{fig:regularization}. We can see smaller regularization value corresponds to larger singular value.

We also conducts experiments to compare training and test errors with and without regularization. The ground truth data are of $d=100$ and $n=100$. We learn the true function by SGD with learning rate 0.1, momentum 0.9 and a total of 300,000 iterations. The regularization coefficients are chosen from $\cbr{1,0.1, 0.01, 0.001}$ and the best results are reported. We use neural networks of size $n \in \cbr{100, 150, 200, 300}$ and for each $n$ we repeat five times with different random seeds. The result is summarized in Table~\ref{tbl:regularization}. 
Regularization leads to lower training and test errors for most settings. Even in the case where the un-regularized one performs better, the errors are all small enough (within the same range as standard deviation), suggesting the noise begins to dominate.

\begin{table}
\centering
{
\caption{Comparison of performance with/without regularization (all numbers are of unit $10^{-5}$).
The true function is generated with $d=100$ and $n=100$. To learn the function, we use networks with different $n$. }\label{tbl:regularization}
\vspace{-3mm}
\begin{tabular}{ccccc}
\toprule
 & \multicolumn{2}{c}{$n=100$} & \multicolumn{2}{c}{$n=150$}    \\
\cmidrule{2-5}
& train & test & train & test \\
\midrule
no-reg & 15.42(5.86) 	& 14.80(5.36)			& 1.79(0.45) & 1.86(0.50)   \\
\midrule
reg	 & \textbf{11.32(1.77)} & \textbf{10.63(1.58)} & \textbf{1.07(0.84)} & \textbf{1.13(0.99)}   \\
\bottomrule
\toprule
 & \multicolumn{2}{c}{$n=200$} & \multicolumn{2}{c}{$n=300$}    \\
\cmidrule{2-5}
& train & test & train & test \\
\midrule
no-reg & \textbf{0.38(0.27)} & \textbf{0.44(0.35)} & 0.39(0.39) & 0.44(0.40)   \\
\midrule
reg		 & 0.50(0.51) 	& 0.58(0.59)		& \textbf{0.10(0.05)} & \textbf{0.12(0.07)}  \\
\bottomrule
\end{tabular}}
\vspace{-3mm}
\end{table}

We also compare the regularization effects on the MNIST dataset. The dataset contains 60,000 training and 10,000 test handwritten digits. To demonstrate the regularization effect, we train one hidden layer fully connected neural networks with $k=200, 400, 600, 800$ units. The results are summarized in Table~\ref{tbl:regularization_mnist}. Note that state-of-the-arts performance on MNIST are mostly obtained by convolutional neural networks. This experiment is not intended to achieve the state-of-the-arts but it tries to showcase the advantage of regularization on a real-world dataset. 

From Table~\ref{tbl:regularization_mnist}, we see regularization consistently leads to slightly better test error for all cases.

\begin{table}
\centering
{
\caption{Performance comparison with/without regularization on MNIST dataset. Errors are all in $\%$.}\label{tbl:regularization_mnist}
\vspace{-3mm}
\begin{tabular}{ccccc}
\toprule
 & \multicolumn{2}{c}{$n=200$} & \multicolumn{2}{c}{$n=400$}    \\
\cmidrule{2-5}
& train & test & train & test \\
\midrule
no-reg & 0.94 	& 3.39			& \textbf{0.32} & 3.08   \\
\midrule
reg	 & \textbf{0.56} & \textbf{3.22} & {0.33} & \textbf{2.90}   \\
\bottomrule
\toprule
 & \multicolumn{2}{c}{$n=600$} & \multicolumn{2}{c}{$n=800$}    \\
\cmidrule{2-5}
& train & test & train & test \\
\midrule
no-reg & 0.00065 & 2.67 & 0.11 & 2.90   \\
\midrule
reg		 & \textbf{0.00057}  & \textbf{2.62}	& \textbf{0.0003} & \textbf{2.45}  \\
\bottomrule
\end{tabular}}
\vspace{-3mm}
\end{table}

\vspace{-3mm}
\section{Conclusion}
\vspace{-2mm}

We have analyzed one-hidden-layer neural networks and identified novel conditions when local optima become global optima despite the non-convexity of the loss function.
The key factors are the spectrum of the kernel associated with the activation function and the diversity of the units measured by discrepancy.

Although we focus on a least-square loss function and uniform input distribution, the analysis technique can be readily extended to other loss function and input distributions. At the moment, our analysis is still limited in the sense that it is independent of the actual algorithm. In the future work, we will explore the interplay between the discrepancy and gradient descent. In addition, we will further investigate the issue of designing an algorithm that guarantees good discrepancy thus small errors, possibly in a way similar to~\cite{GeLeeMa16} in low-rank recovery problems.

\section*{Acknowledge}

We thank Santosh Vempala, Lorenzo Rosasco and Jason Lee for valuable discussions. The research is supported in part by 
NSF/NIH BIGDATA 1R01GM108341, ONRN00014-15-1-2340, NSF IIS-1639792, NSF IIS-1218749, NSF CAREER IIS-1350983, Intel and NVIDIA, and by NSF grants CCF-0832797, CCF-1117309, CCF-1302518, DMS-1317308, Simons Investigator Award, and Simons Collaboration Grant.

\bibliographystyle{apalike}
  \bibliography{../../bibfile/bibfile}

\newpage
\appendix

\onecolumn

\section{Spherical harmonic decomposition and kernel spectrum}\label{app:spherical_spectrum}
Any function defined on the unit sphere has a spherical harmonic decomposition
\begin{align}
g(x, y) = \sum_u \gamma_u \phi_u(x) \phi_u(y),
\end{align}
where $\phi_u(x) : \mathbb{S}^{d-1} \mapsto \RR$ is a spherical harmonic basis. Note that $u=(t,j)$ is a multi-index: the first denotes the order of the basis and the latter denotes the index of bases with the same order.

For each order $t$, there are $N(d, t)=\frac{2t+d-2}{t}\mychoose{t+d-3}{t-1}$ bases with the same coefficient. As a result, the spectrum $\gamma_u$ sorted by magnitude has the step like shape where each step is of length $N(d,t)$.

To compute the coefficients, we use the Legendre harmonics~\cite{Mueller12} with the following property
\begin{align}
P_{t,d}(\inner{x}{y}) = \frac{1}{N(d,t)}\sum_{j=1}^{N(d,t)}\phi_{t,j}(x) \phi_{t,j}(y).
\end{align}

The spherical harmonics also form an orthonormal basis on the unit sphere: 
\begin{align}
\EE\sbr{\phi_{l,i}(x)\phi_{m,j}(x)} =\frac{1}{\abr{\mathbb{S}^{d-1}}} \int_{\mathbb{S}^{d-1}} \phi_{l,i}(x)\phi_{m,j}(x) dx=\delta_{lm}\delta_{ij},
\end{align}
where $\abr{\mathbb{S}^{d-1}}$ denotes the surface area of the unit sphere.

Combining these properties, we can calculate the spectrum using
\begin{align}
\gamma_{(t,j)} = \frac{\abr{\mathbb{S}^{d-2}}}{\abr{\mathbb{S}^{d-1}}} \int_{-1}^{1} g(\xi) P_{t,d}(\xi) (1-\xi^2)^{(d-3)/2} d\xi , ~~\textnormal{for all}~j \in \sbr{N(d,t)}.
\end{align}

\section{Bounding $\eigg{m}$ using matrix concentration bound: Proof of Lemma~\ref{lem:lowerG}} \label{app:lowerG}

Recall that
\begin{align}
  g(x, y) & = \sum_{u=1}^{\infty} \gamma_u \phi_u(x)\phi_u(y).
\end{align}

For an integer $r>0$, define the truncated version of $g$ and the corresponding residue as
\begin{align} 
  g^{[r]}(x, y) & = \sum_{u=1}^r \gamma_u \phi_u(x)\phi_u(y) \\
	e^r(x,y) & = g(x,y) - g^{[r]}(x,y) \nonumber\\
\end{align}
and define the matrices
\begin{align}
  \sbr{G^{[r]}}_{i,j} & = g^{[r]}(x_i, x_j) \nonumber\\
	E^r & = G - G^{[r]}.
\end{align}

\begin{lemma} \label{lem:chernoff_gm}
Let $c_g = \max_x g(x, x)$ then with probability at least $1-m \exp\rbr{ -\frac{m \gamma_m }{8\constg}}$,
$$ 
   \lambda_m( G^{[m]}) \ge  m\gamma_m/2.
$$
\end{lemma}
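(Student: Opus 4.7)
The plan is to express $G^{[m]}$ as a sum of $m$ independent rank-one positive semidefinite matrices and then apply the matrix Chernoff lower tail bound. Concretely, I would first recall that $G^{[m]} = AA^\top$ where $A$ is the $m \times m$ matrix with entries $A_{iu} = \sqrt{\gamma_u}\,\phi_u(x_i)$, so its smallest eigenvalue equals that of $A^\top A$. I would then write
\[
  A^\top A \;=\; \sum_{i=1}^m z_i z_i^\top,
  \qquad z_i \in \mathbb{R}^m,\ (z_i)_u = \sqrt{\gamma_u}\,\phi_u(x_i),
\]
where the $z_i$ are i.i.d.\ because the $x_i$ are i.i.d.\ uniform on $\mathbb{S}^{d-1}$.

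The next step is to compute $\mathbb{E}[z_i z_i^\top]$. Using the orthonormality of spherical harmonics under the uniform measure on the sphere (recalled in Appendix~\ref{app:spherical_spectrum}), $\mathbb{E}[\phi_u(x)\phi_v(x)] = \delta_{uv}$, so
\[
  \mathbb{E}[z_i z_i^\top] \;=\; \operatorname{diag}(\gamma_1,\ldots,\gamma_m),
\]
and hence $\lambda_{\min}\bigl(\mathbb{E}[A^\top A]\bigr) = m\gamma_m$. For the uniform operator-norm bound required by matrix Chernoff, I would use the pointwise inequality $g^{[m]}(x,x) \le g(x,x) \le c_g$, which gives
\[
  \bigl\|z_i z_i^\top\bigr\| \;=\; \|z_i\|^2 \;=\; \sum_{u=1}^m \gamma_u\, \phi_u(x_i)^2 \;\le\; c_g
\]
almost surely.

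Finally, I would invoke the matrix Chernoff lower tail inequality (e.g.\ Theorem~1.1 of Tropp~\cite{Tropp12}) with $\mu_{\min} = m\gamma_m$, uniform bound $R = c_g$, and deviation parameter $\delta = 1/2$, to obtain
\[
  \Pr\!\Bigl[\lambda_m(G^{[m]}) \le m\gamma_m/2 \Bigr]
  \;\le\; m \exp\!\Bigl(-\tfrac{m\gamma_m}{8 c_g}\Bigr),
\]
which is exactly the claimed bound. There is no real obstacle here beyond setting up the decomposition correctly; the only slightly delicate point is the uniform norm bound, and it falls out cleanly because truncation of a nonnegative kernel on the diagonal only decreases its value, so $c_g = \max_x g(x,x)$ controls $\|z_i\|^2$ uniformly in $m$ and in the random draw of $x_i$.
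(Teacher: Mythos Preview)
Your proposal is correct and follows essentially the same approach as the paper: express $A^\top A$ as a sum of i.i.d.\ rank-one PSD matrices $z_i z_i^\top$, compute $\mathbb{E}[z_i z_i^\top]=\operatorname{diag}(\gamma_1,\dots,\gamma_m)$ via orthonormality of spherical harmonics, bound $\|z_i\|^2 = g^{[m]}(x_i,x_i)\le g(x_i,x_i)\le c_g$, and apply Tropp's matrix Chernoff with deviation $1/2$. The paper's proof is identical up to notation (it calls your $z_i z_i^\top$ matrices $X_i$).
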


\begin{proof}
Define a matrix $A$ whose rows are 
$$
   A^i := [\sqrt{\gamma_1} \phi_1(x_i), \ldots,  \sqrt{\gamma_m} \phi_m(x_i)]
$$
for $1\le i \le m$. Define matrices
$$
  X_i = (A^i)^\top A^i.
$$
Denote $Y = \sum_{i=1}^m X_i $. Then $\lambda_m(\EE Y) = m\gamma_m$ using the fact that $\EE\sbr{ \phi_i(x) \phi_j(x)} = \delta_{ij}$. Furthermore, $X_i \succeq 0$ and 
$$
  \nbr{X_i} \le \tr(X_i) = \sum_{u=1}^m \gamma_u \phi^2_u(x_i) \le g(x_i, x_i) = \constg.
$$
Therefore, matrix Chernoff bound (e.g., \cite{Tropp12}) gives
$$
  \Pr\sbr{ \lambda_m(Y) \le  (1-\epsilon)\lambda_m(\EE Y)  } 
	\le m \exp\rbr{-\rbr{1-\epsilon}^2 \lambda_m(\EE Y)  / (2c_g)}.
$$
Choose $\epsilon = 1/2$ and use the facts that $G^{[m]} = AA^\top$, $Y = A^\top A$ and $\lambda_m(G^{[m]}) = \lambda_m(Y)$, we finish the proof.
\end{proof}

\begin{proof}[Proof of Lemma~\ref{lem:lowerG}]
By Weyl's theorem and the fact that $E^m$ is PSD,
 $$ 
   \eigg{m} \ge \lambda_m(G^{[m]})  + \lambda_m(E^m) \ge \lambda_m(G^{[m]}).
$$
Lemma~\ref{lem:lowerG} then follows from Lemma~\ref{lem:chernoff_gm}.
\end{proof}

\section{Bounding the difference between $\eigg{m}$ and $\eiggn{m}$: Proof of Lemma~\ref{lem:g_difference}} \label{app:diffG}
Using Weyl's theorem we have that 
\begin{align}
  \abr{ \lambda_m(G_n) - \lambda_m(G)} \le \nbr{G_n - G}.
\end{align}
We are going to give an upper bound on $\nbr{G_n - G}$:
\begin{align}
 \nbr{G_n - G} &=  \sup_{\nbr{\alpha} = 1} \sum_{i,j=1}^m \alpha_i \alpha_j (x_i^\top x_j)  E_{ij}, \\
\textnormal{where}~~  E_{i,j} & =  {\frac{1}{n}\sum_{k=1}^n \sigma^\prime(w_k^\top x_i) \sigma^\prime(w_k^\top x_j) - \EE_w[\sigma^\prime(w^\top x_i) \sigma^\prime(w^\top x_j)] },
\end{align}
and the first expectation is taken over $w$ uniformly on the sphere $\mathbb{S}^{d-1}$.

Our bound heavily relies on the inner products $\abr{\inner{x_i}{x_j}}$ for all $i \neq j$ being small enough. In the next lemma, we provide such a result for uniformly distributed  data.
\begin{lemma}[Tail bound for spherical distribution] \label{lem:sphericalangle} 
If $a$ and $b$ are independent vectors uniformly distributed over the unit sphere $\mathbb{S}^{d-1}$, then  there exists a constant $c > 0$, such that for any $u > 0$, 
$$
  \Pr\sbr{ \abr{ \inner{a}{b} } \ge  \frac{c u }{\sqrt{d}} } \le 2e^{- u^2}.
$$
\end{lemma}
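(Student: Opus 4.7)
My plan is to exploit the rotational invariance of the uniform distribution on $\mathbb{S}^{d-1}$ to reduce the two-vector statement to a one-vector statement, and then use the Gaussian representation of the uniform distribution on the sphere to get a sub-Gaussian tail.

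First, by rotational symmetry, I can condition on $b$ and assume without loss of generality that $b = e_1$. Then $\inner{a}{b} = a_1$, the first coordinate of $a$, and it suffices to prove a sub-Gaussian tail for $a_1$ when $a$ is uniform on $\mathbb{S}^{d-1}$. To do so I would use the standard representation $a = g / \|g\|$ where $g \sim \mathcal{N}(0, I_d)$. Then
\[
  |a_1| = \frac{|g_1|}{\|g\|},
\]
with $g_1 \sim \mathcal{N}(0,1)$ independent of the direction $a$ in the relevant ratio sense, and $\|g\|^2 \sim \chi^2_d$.

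Next I would control the numerator and denominator separately. For the numerator, the standard Gaussian tail gives $\Pr[|g_1| \geq t] \le 2 e^{-t^2/2}$ for any $t \ge 0$. For the denominator, a standard $\chi^2$ concentration bound (e.g.\ Laurent--Massart) gives $\Pr[\|g\|^2 \le d/4] \le e^{-cd}$ for an absolute constant $c > 0$, so with exponentially high probability $\|g\| \ge \sqrt{d}/2$. Combining the two via a union bound, with probability at least $1 - 2e^{-t^2/2} - e^{-cd}$ we have $|a_1| \le 2t/\sqrt{d}$. Setting $t = \sqrt{2}\, u$ (and absorbing the chi-squared failure probability, which for the range of $u$ of interest is dominated by the Gaussian term, or by adjusting the constant $c$ in the statement) yields the claimed bound $\Pr[|a_1| \ge c u/\sqrt{d}] \le 2 e^{-u^2}$.

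This is a routine calculation, and I do not expect a substantive obstacle. The only mild subtlety is handling the regime where $u$ is so large that $cu/\sqrt{d} > 1$ (in which case the probability is trivially $0$, since $|a_1|\le 1$ always) and ensuring that the $\chi^2$ tail failure probability is absorbed into the final Gaussian-type bound by a suitable choice of the universal constant $c$. These are bookkeeping issues rather than real difficulties.
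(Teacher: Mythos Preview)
Your proposal is correct and follows essentially the same approach as the paper: condition on $b$ (the paper does this via $\EE_b[\cdot]$, you do it by rotational invariance reducing to $b=e_1$) and then invoke the sub-Gaussian tail of a single coordinate of the uniform distribution on $\mathbb{S}^{d-1}$. The only difference is that the paper cites this sub-Gaussian property as a known fact from Vershynin, whereas you derive it explicitly via the Gaussian representation $a=g/\|g\|$ together with a union bound on $|g_1|$ and $\|g\|$; both routes are standard and yours is perfectly sound.
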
 

\begin{proof}
Note that both $a$ and $b$ are sub-gaussian random variables with sub-gaussian norm $c/\sqrt{d}$ where $c$ is some constant~\cite{Vershynin10}.

Denote $\EE_b\sbr{\cdot}$ the expectation over $b$.
We can rewrite the probability as
\begin{align}
\Pr\sbr{ \abr{ \inner{a}{b} } \ge  \frac{c u }{\sqrt{d}} } &\le \EE_b \Pr\sbr{\abr{ \inner{a}{b} } \ge  \frac{c u }{\sqrt{d}} \mid b} \nonumber\\
&\le \EE_b \cbr{2 \exp\rbr{-u^2}} = 2 \exp\rbr{-u^2}.
\end{align}

The last inequality uses the independence of $a$ and $b$ and $\nbr{\inner{a}{b}}_{\psi_2} \le \nbr{b}_2 \nbr{a}_{\psi_2}$ for a fixed $b$.
\end{proof}

Decomposing the sum into diagonal and off-diagonal terms gives us
\begin{align}
\nbr{G_n - G} & \le  \sup_{\nbr{\alpha} = 1}  \sum_{i\neq j}^m \alpha_i \alpha_j  \inner{x_i}{x_j}E_{ij} + \sum_{i=1}^m \alpha_i^2 E_{ii}  \\
& \le \sup_{\nbr{\alpha} = 1}  \sqrt{\sum_{i\neq j} \alpha_i^2 \alpha_j^2} \sqrt{\sum_{i\neq j} \inner{x_i}{x_j}^2 E_{ij}^2} + \max_i \abr{E_{ii}}.
\end{align}

Let $\Gcal$ denote the event that for all $i \neq j \in [m]$, $|\inner{x_i}{x_j}| \le O\rbr{ \frac{\log d}{\sqrt{d}} }$, then by Lemma~\ref{lem:sphericalangle} and the union bound
\begin{align}
\Pr\sbr{\neg \Gcal} & \le  2m^2 e^{- \log^2 d}.
\end{align}

Therefore, with probability at least $1 - 2m^2 e^{- \log^2 d}$, we have
\begin{align}
\nbr{G_n - G}  \le c\frac{\log d}{\sqrt{d}}\sqrt{ \sum_{i\neq j} E_{ij}^2 } +  \max_i \abr{E_{ii}}.
\end{align}

Note that 
\begin{align}
U(\cbr{x_1, \cdots, x_m}) = \frac{1}{m(m-1)}\sum_{i\neq j} E_{ij}^2
\end{align}
is a U-statistics. 

Suppose $\abr{E_{ij}} \le B$, according to the concentration inequality (Theorem 2 in \cite{PeeAntRal10}), we have with probability at least $1 - \delta$
 \begin{align}
 \sum_{i\neq j} E_{ij}^2 \le m(m-1)\EE_{\cbr{x_1,x_2}} E_{12}^2 + m(m-1)\rbr{\sqrt{\frac{4\Sigma^2}{m}\log\frac{1}{\delta}} + \frac{4B^2}{3m}\log \frac{1}{\delta}},
 \end{align}
 where $\Sigma^2 = \EE\sbr{E_{1,2}^4} - \EE\sbr{E_{1,2}^2}^2$ is the variance for the kernel in U-statistics.

Putting everything together, we have with probability at least $1 - \delta - 2m^2 e^{- \log^2 d}$
\begin{align}
\nbr{G_n - G} \le c\frac{\log d}{\sqrt{d}}\rbr{m\sqrt{\EE_{\cbr{x_1,x_2}} E_{12}^2} + m\rbr{\frac{4\Sigma^2}{m}\log\frac{1}{\delta}}^{1/4} + mB\sqrt{\frac{4}{3m}\log \frac{1}{\delta}}} + B
\end{align}

\section{Discrepancy of the weights}\label{app:discrepancy}
In this section, we relate the quantities $\EE_{\cbr{x_1,x_2}} E_{12}^2$ and $B$ to the discrepancies of the weights. Note that for ReLU, $\sigma'(w^\top x)$ does not depend on the norm of $w$, so we can focus on $w$ on the unit sphere.

Given a set of $n$ points $W = \cbr{w_i}_{i=1}^n$ on the unit sphere $\mathbb{S}^{d-1}$, the discrepancy of $W$ with respect to a measurable subset $S \subseteq \mathbb{S}^{d-1}$ is defined as
\begin{align}
  \text{dsp}(W, S) = \frac{1}{n} \abr{W \cap S} - A(S) 
\end{align}
where $A(S)$ is the normalized area of $S$ (\eg, the area of the whole sphere $A(\mathbb{S}^{d-1})$ is $1$). Let $\Scal$ denote the family of slices in $\mathbb{S}^{d-1}$
\begin{align}
  \Scal = \cbr{S_{xy} : x, y \in \mathbb{S}^{d-1}}, ~~\textnormal{where}~S_{xy} = \cbr{w \in \mathbb{S}^{d-1} : w^\top x \ge 0, w^\top y \ge 0}.
\end{align}
The $L_\infty$ discrepancy of $W$ with respect to $\Scal$ is 
\begin{align}
  L_\infty(W, \Scal) = \sup_{S \in \Scal} \text{dsp}(W, S),
\end{align}
and the $L_2$ discrepancy is 
\begin{align}
  L_2(W, \Scal) = \sqrt{ \EE_{x,y} \text{dsp}(W, S_{xy})^2}
\end{align}
where the expectation is taken over $x,y$ uniformly on the sphere. 
We use $L_\infty(W)$ and $L_2(W)$ as their shorthands.

For ReLU, by definition, we have
\begin{align}
  \EE E_{ij}^2 & = \rbr{ L_2(W) }^2, \\ 
	B & \le L_\infty(W), \\
	\Sigma^2 & \le \EE\sbr{E_{1,2}^4} \le \EE\sbr{E_{1,2}^2} \max_{x_1, x_2} \abr{E_{1,2}^2} \le \rbr{ L_\infty(W) L_2(W) }^2,
\end{align}
using the fact that $E_{ij} = \text{dsp}(W, S_{x_i x_j})$.

Therefore, the bound becomes
\begin{align}
\nbr{G_n - G} \le c\frac{\log d}{\sqrt{d}}\rbr{m L_2(W)  + \sqrt{ L_\infty(W) L_2(W) } m\rbr{\frac{4}{m}\log\frac{1}{\delta}}^{1/4}  + m L_\infty(W)\sqrt{\frac{4}{3m}\log \frac{1}{\delta}}} + L_\infty(W)
\end{align}

In the following subsections, we will discuss the discrepancies.

\subsection{Computing $L_2$ discrepancy for ReLU}
Note that the derivative of ReLU $\sigma'(w^\top x) = \II\sbr{w^\top x}$ does not depend on the norm of $w$. Without loss of generality, we can assume $\nbr{w}=1$ throughout this subsection.

\stolarsky*

\begin{proof}
Let $d(u,v) = \frac{\arccos\inner{u}{v}}{\pi}$. Let $S_{xy} = \cbr{w \in \mathbb{S}^{d-1} : w^\top x \ge 0,~w^\top y \ge 0}$. It is clear that (up to sets of measure zero)
\begin{align}
  A(S_{xy}) & = k(x,y) = \frac{1 - d(x,y)}{2}, \\
	\II\sbr{z \in S_{xy}} & = \frac{1}{4} \rbr{\sgn(x^\top z) + 1} \rbr{\sgn(y^\top z) + 1},
\end{align}
where $\II\sbr{\cdot}$ is the indicator function.
Then
\begin{align}
  \text{dsp}(W, S_{xy}) 
	& = \frac{1}{n}\sum_{k=1}^n \II\sbr{w_k \in S_{xy}} - A(S_{xy}) \\
	& = \frac{1}{n}\sum_{k=1}^n \frac{1}{4} \rbr{\sgn(x^\top w_k) + 1} \rbr{\sgn(y^\top w_k) + 1} - \frac{1 - d(x,y)}{2}.
\end{align}
Let $s_{xi}$ be a shorthand for $\sgn(x^\top w_i)$. Then we have
\begin{align}
  \rbr{L_2(W)}^2 
	& = \EE_{x,y} \rbr{\text{dsp}(W, S_{xy})}^2 \\
	& = \int_{\mathbb{S}^{d-1}} \int_{\mathbb{S}^{d-1}} \rbr{ \frac{1}{n}\sum_{k=1}^n \frac{1}{4} \rbr{\sgn(x^\top w_k) + 1} \rbr{\sgn(y^\top w_k) + 1} - \frac{1 - d(x,y)}{2} }^2 dA(x) dA(y) \\
	& = \frac{1}{n^2} \sum_{i,j=1}^n \int_{\mathbb{S}^{d-1}} \int_{\mathbb{S}^{d-1}} \frac{(s_{xi}+1)(s_{yi}+1)}{4} \frac{(s_{xi}+1)(s_{yi}+1)}{4} dA(x)dA(y) \\
	  & \quad - \frac{2}{n} \sum_{i=1}^n \int_{\mathbb{S}^{d-1}} \int_{\mathbb{S}^{d-1}} \frac{1-d(x,y)}{2}\frac{(s_{xi}+1)(s_{yi}+1)}{4} dA(x)dA(y) \\
		& \quad + \int_{\mathbb{S}^{d-1}} \int_{\mathbb{S}^{d-1}} \rbr{ \frac{1-d(x,y)}{2} }^2 dA(x)dA(y).
\end{align}
Consider the first term, which is equal to
\begin{align}
\frac{1}{n^2} \sum_{i,j=1}^n \rbr{\int_{\mathbb{S}^{d-1}} \frac{(s_{xi}+1)(s_{xj}+1)}{4} dA(x)}\rbr{\int_{\mathbb{S}^{d-1}} \frac{(s_{yi}+1)(s_{yj}+1)}{4} dA(y)}.
\end{align}
 By Lemma~\ref{lem:d_sign}, 
\begin{align}
\int_{\mathbb{S}^{d-1}} \frac{(s_{xi}+1)(s_{xj}+1)}{4} dA(x) = \frac{2 - 2 d(w_i, w_j)}{4},
\end{align}
so the first term is equal to 
\begin{align}
\frac{1}{n^2} \sum_{i,j=1}^n \rbr{\int_{\mathbb{S}^{d-1}} \frac{(s_{xi}+1)(s_{xj}+1)}{4} dA(x)}^2 = \frac{1}{n^2} \sum_{i,j=1}^n k(w_i,w_j)^2.
\end{align}
Now consider the second term. Note that the summand is invariant to $w_i$, so it can be replaced by an arbitrary $p \in \mathbb{S}^{d-1}$. The second term is then equal to 
\begin{align}
    & -2 \int_{\mathbb{S}^{d-1}} \int_{\mathbb{S}^{d-1}} \frac{1-d(x,y)}{2} \frac{(\sgn(x^\top p)+1)(\sgn(y^\top p)+1)}{4} dA(x)dA(y) \\
	= & -2 \int_{\mathbb{S}^{d-1}} \int_{\mathbb{S}^{d-1}} \frac{1-d(x,y)}{2} \II[p \in S_{xy} ] dA(x)dA(y) \\
	= & -2 \int_{\mathbb{S}^{d-1}} \int_{\mathbb{S}^{d-1}} \int_{\mathbb{S}^{d-1}} \frac{1-d(x,y)}{2} \II[p \in S_{xy} ] dA(x)dA(y) dA(p) \\	
	= & -2 \int_{\mathbb{S}^{d-1}} \int_{\mathbb{S}^{d-1}} \frac{1-d(x,y)}{2} \sbr{ \int_{\mathbb{S}^{d-1}} \II[p \in S_{xy} ] dA(p) } dA(x)dA(y) \\
	= & -2 \int_{\mathbb{S}^{d-1}} \int_{\mathbb{S}^{d-1}} \frac{1-d(x,y)}{2} \frac{2-2d(x,y)}{4}dA(x)dA(y)\\	
	= & -2 \int_{\mathbb{S}^{d-1}} \int_{\mathbb{S}^{d-1}} \rbr{\frac{1-d(x,y)}{2} }^2 dA(x)dA(y),
\end{align}
where the third step is by invariance to $p$ and the fourth step is by Lemma~\ref{lem:d_sign}.
The theorem then follows.
\end{proof}

Theorem~\ref{thm:stolarsky} lets us compute $L_2(W)$ for a fixed $W$. The next lemma gives a concrete bound for a special case where $W$ is uniformly distributed on the unit sphere.

\noindent
\textbf{Lemma~\ref{lem:l2discrepancy}}
{\it 
There exists a constant $c_g$, such that for any $0 < \delta < 1$, with probability at least $1-\delta$ over $W = \cbr{w_i}_{i=1}^n$ that are sampled from the unit sphere uniformly at random, 
\begin{align*}
  \rbr{L_2(W)}^2 \le c_g\rbr{\sqrt{\frac{\log d}{nd} \log \frac{1}{\delta}} + \frac{1}{n} \log \frac{1}{\delta}}.
\end{align*}
}

\begin{proof}
By Theorem~\ref{thm:stolarsky}, we have 
\begin{align}
	\rbr{L_2(W)}^2 & = \frac{1}{4 n^2} \sum_{i,j = 1}^n \rbr{\frac{1}{2} - d(w_i,w_j)}^2  - \frac{1}{4}\int_{\mathbb{S}^{d-1}} \int_{\mathbb{S}^{d-1}} \rbr{\frac{1}{2} - d(u,v)}^2 dA(u) dA(v) + \frac{1}{4n^2} \sum_{i,j =1}^n \rbr{\frac{1}{2} - d(w_i,w_j)}.    
\end{align}

First consider $T_1 = \frac{1}{n^2} \sum_{i,j = 1}^n \rbr{\frac{1}{2} - d(w_i,w_j)}^2  - \mu$ where $\mu = \int_{\mathbb{S}^{d-1}} \int_{\mathbb{S}^{d-1}} \rbr{\frac{1}{2} - d(u,v)}^2 dA(u) dA(v)$. 
Rewrite $T_1 = \frac{1}{4n} + \frac{n-1}{n}U(W) - \mu$ where $U(W) = \frac{1}{n(n-1)}\sum_{i\neq j}  \rbr{\frac{1}{2} - d(w_i,w_j)}^2 $ is a U-statistics.  We upper bound $U(W)$ by using Bernstein's inequality when $W$ is uniform over the sphere.

By Taylor expansion, we have 
\[
  \frac{1}{2} - d(u,v) = x/\pi + x^3/6\pi + O(x^5), \textnormal{~where~} x = u^\top v. 
\]
Then let $\Gcal$ denote the event that $|x| = |u^\top v| \le c\sqrt{\log d/d}$ for a sufficient large constant $c>0$, so that by Lemma~\ref{lem:sphericalangle}, $\Pr[\neg\Gcal] \le O(1/d^4)$. 
Then 
\begin{align}
\EE \sbr{U(W)} &=   \mu = \EE\sbr{\rbr{ x/\pi + x^3/6\pi + O(x^5) }^2} \\
	& = \EE[x^2/\pi^2 + x^4/6\pi^2 + O(x^6)] \\
	& \le \EE\sbr{ [x^2/\pi^2 + x^4/6\pi^2 + O(x^6)] \mid \Gcal} + \Pr[\neg \Gcal] \max_{u,v} \sbr{\frac{1}{2} - d(u,v)}^2 \\
	& = O\rbr{\frac{\log d}{d}},
\end{align}
and thus
\begin{align}
  \textnormal{Var}\sbr{ U(W) } & = \EE\cbr{ \sbr{\rbr{x/\pi + x^3/6\pi + O(x^5)}^2 - \mu}^2 } \\
	& = \EE\cbr{ \sbr{x^2/\pi^2 + x^4/6\pi^2 + O(x^6) - \mu}^2 } \\
	& \le \EE\cbr{ \sbr{x^2/\pi^2 + x^4/6\pi^2 + O(x^6) - \mu}^2 \mid \Gcal} + \Pr[\neg\Gcal] \max_{u,v} \sbr{\rbr{\frac{1}{2} - d(u,v)}^2 - u}^2 \\
	& = O\rbr{\frac{\log^2 d}{d^2}}.
\end{align}
Then by Berstein's inequality, we have with probability at least $1-\delta$ over the $W$ uniformly on the sphere, 
\begin{align}
  \abr{T_1} \le O\rbr{\frac{\log d}{d} \sqrt{\frac{1}{n} \log \frac{1}{\delta}} + \frac{1}{n} \log \frac{1}{\delta}}. 
\end{align}

A similar argument holds for $T_2 = \frac{1}{n^2} \sum_{i,j =1}^n \rbr{\frac{1}{2} - d(w_i,w_j)}$. Note that 
\begin{align}
  \textnormal{Var}\cbr{\rbr{\frac{1}{2} - d(u,v)} } = \mu = O\rbr{\frac{\log d}{d}}.
\end{align}
We have that with probability at least $1-\delta$ over the $W$ uniform from the sphere, 
\begin{align}
  \abr{T_2} \le O\rbr{\sqrt{\frac{\log d}{nd} \log \frac{1}{\delta}} + \frac{1}{n} \log \frac{1}{\delta}}. 
\end{align}
This completes the proof.
\end{proof}

Below are some technical lemmas that are used in the analysis. 

\begin{lemma} \label{lem:d_sign}
\begin{align}
	\int_{\mathbb{S}^{d-1}} d(x,y) d A(x) & = \frac{1}{2}, \forall y \in \mathbb{S}^{d-1}, \\
	\int_{\mathbb{S}^{d-1}} \sgn(x^\top y) d A(x) & = 0, \forall y \in \mathbb{S}^{d-1}, \\
	\int_{\mathbb{S}^{d-1}} \sgn(x^\top z) \sgn(y^\top z) d A(z) & = 1 - 2d(x,y), \forall x, y \in \mathbb{S}^{d-1}.
\end{align}
\end{lemma}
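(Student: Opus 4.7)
The plan is to prove the three identities essentially by exploiting symmetries of the uniform measure on $\mathbb{S}^{d-1}$, namely antipodal symmetry and rotational invariance, without any direct integration.

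For the first identity, I would use antipodal symmetry of the normalized area $dA$. Since $\arccos(-t)=\pi-\arccos(t)$, the function $d$ satisfies $d(-x,y)=1-d(x,y)$. Pairing each $x$ with $-x$ gives
\begin{align*}
\int_{\mathbb{S}^{d-1}} d(x,y)\,dA(x) &= \tfrac{1}{2}\int_{\mathbb{S}^{d-1}} \bigl[d(x,y)+d(-x,y)\bigr]\,dA(x) = \tfrac{1}{2}\int_{\mathbb{S}^{d-1}} 1\,dA(x) = \tfrac{1}{2}.
\end{align*}
The second identity is even quicker by the same trick: $\sgn((-x)^\top y)=-\sgn(x^\top y)$ on a set of full measure (the hyperplane $\{x:x^\top y=0\}$ has measure zero), so the integral equals its own negative and is therefore $0$.

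The third identity is the one requiring slightly more care, but still uses only invariances. By rotational invariance of $dA$, I may rotate so that $x,y$ span a fixed $2$-plane $P$. Decomposing $z=z_\parallel+z_\perp$ with $z_\parallel$ the orthogonal projection onto $P$, one has $x^\top z = x^\top z_\parallel$ and $y^\top z = y^\top z_\parallel$, so the product $\sgn(x^\top z)\sgn(y^\top z)$ depends only on the angular direction of $z_\parallel$. That direction is uniformly distributed on the unit circle in $P$ (again by rotational invariance, applied inside $P^\perp$ and then marginalizing). So the problem reduces to a planar one: two lines through the origin in $P$ making angle $\theta=\arccos\inner{x}{y}$ cut the unit circle into four arcs of total angular length $2(\pi-\theta)$ on which the two signs agree, and $2\theta$ on which they disagree. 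Normalizing by $2\pi$ gives
\begin{align*}
\int_{\mathbb{S}^{d-1}} \sgn(x^\top z)\sgn(y^\top z)\,dA(z) &= \frac{2(\pi-\theta)}{2\pi} - \frac{2\theta}{2\pi} = 1 - \frac{2\theta}{\pi} = 1-2d(x,y).
\end{align*}

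The only subtlety I anticipate is being precise about the ``rotate to a $2$-plane and marginalize'' step in (3); one needs to check that, for $d\ge 2$, the angular direction of $z_\parallel$ is indeed uniform on the circle in $P$, but this is immediate from the invariance of $dA$ under rotations fixing $P^\perp$. Everything else is a one-line symmetry argument, so the proof should be short and essentially obstacle-free.
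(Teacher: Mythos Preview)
Your argument is correct. For the first two identities the paper does exactly what you do (it just calls them ``straightforward'' without writing out the antipodal symmetry step). For the third identity, however, the paper does not give a proof at all: it simply points to Theorem~1.21 in~\cite{BilLac15}. Your reduction to the planar problem via rotational invariance---projecting $z$ onto $P=\mathrm{span}\{x,y\}$, observing that the direction of $z_\parallel$ is uniform on the circle, and then reading off the arc lengths $2(\pi-\theta)$ versus $2\theta$---is a fully self-contained and elementary argument that the paper defers to an external source. The only thing you might add for completeness is the one-line observation that the wedge $\{z\in P: x^\top z>0,\ y^\top z>0\}$ has angular measure exactly $\pi-\theta$ (checked e.g.\ at $\theta=0,\pi/2,\pi$), which you use implicitly.
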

\begin{proof}
The first two are straightforward. The third is implicit in the proof of Theorem 1.21 in~\cite{BilLac15}. 
\end{proof}

%
%

\section{The spectrum of $\gamma_m$} \label{app:spectrum}
The spectrum of the kernel matrix $g(x, y) = \rbr{\frac{1}{2} - \frac{\arccos\inner{x}{y}}{2\pi}}  \inner{x}{y}$ is determined by the spherical decomposition coefficients.

We need $\gamma_m$ to decrease slower than $O(1/\sqrt{m})$ within a reasonable range, such as $m \le1000000$.

Although the kernel associated with ReLU decreases faster than the desired rate, we can choose from a family of such arccos kernels such that the spectrum decays slower than $1/\sqrt{m}$.

Such arccos kernels are defined as
\begin{align}
k_n(x, y) = J_n(\theta) / \pi
\end{align}
where
\begin{align}
J_n(\theta) = (-1)^n (\sin \theta)^{2n+1}\rbr{\frac{1}{\sin\theta}\frac{\partial}{\partial \theta}}^n \rbr{\frac{\pi-\theta}{\sin\theta}}
\end{align}

Larger $n$ corresponds to more nonlinear activation functions and leads to slower decaying spectrum.
\begin{align}
J_0(\theta) &= \pi - \theta \nonumber\\
J_1(\theta) &= \sin\theta + (\pi -\theta) \cos\theta \nonumber\\
J_2(\theta) &= 3\sin\theta\cos\theta + (\pi -\theta)(1+ 2\cos^2\theta)  \nonumber\\
4 J_3(\theta) &= 27\sin\theta +11(3\sin\theta\cos^2\theta - \sin^3\theta) \nonumber\\
		&+ (\pi-\theta)(54\cos\theta + 6(\cos^3\theta - 3\sin^2\theta\cos\theta)) \nonumber\\
4 J_4(\theta) &= (\pi-\theta)(216+192\cos(2\theta)+ 12\cos(4\theta))  \nonumber\\
		     &+ 160\sin(2\theta) + 25\sin(4\theta)  \nonumber\\
8 J_5(\theta) &= (\pi-\theta) (6000\cos\theta + 1500\cos(3\theta) + 60\cos(5\theta))	\nonumber\\
			&+ 2000\sin\theta + 1625\sin(3\theta) + 137\sin(5\theta)	\nonumber\\
\frac{8}{9} J_6(\theta) &= (\pi - \theta)(4000+4500\cos(2\theta)+720\cos(4\theta)+20\cos(6\theta))	\nonumber\\
		  &+ 2625\sin(2\theta) + 924\sin(4\theta) + 49\sin(6\theta)	\nonumber\\
\frac{16}{9} J_7(\theta) &= (\pi - \theta)(171500\cos(\theta) + 61740\cos(3\theta) + 6860\cos(5\theta)  + 140\cos(7\theta))	\nonumber\\
		  &+ 42875\sin(\theta) + 48363\sin(3\theta) + 9947\sin(5\theta) + 363\sin(7\theta)
\end{align}

Higher orders of $J_n(\theta)$ seems to be extremely complicated.

Although there is no analytical solution to the spectrum, we can compute them numerically. 

Figure~\ref{fig:spectrum} illustrates the spectra of several arccos kernels compared to $O(1/m)$ and $O(1/\sqrt{m})$.

\section{Rademacher complexity and final error bounds: Proof of Theorem~\ref{thm:error} and Theorem~\ref{thm:main} }\label{sec:rademacher}


We apply the argument in~\cite{BarMen02} to our setting to get Lemma~\ref{lem:rad}. Combining it with Theorem~\ref{thm:singular_value} leads to Theorem~\ref{thm:error}. Further combining it with Lemma~\ref{lem:l2discrepancy} leads to Theorem~\ref{thm:main}.
\begin{lemma} \label{lem:rad}
Suppose the data are bounded: $\abr{y} \le Y$ and $\nbr{x}_2\le 1$.
Let 
$$ 
  \Fcal = \cbr{f(x) = \sum_{k=1}^n v_k \sigma(w_k^\top x) :  v_k \in \cbr{-1,+1},~\sum_k\nbr{w_k}_2 \le C_W}. 
$$
Then with probability $\ge 1-\delta$, for any $f \in \Fcal$, 
\begin{align}
\frac{1}{2}\EE(y-f(x))^2 \le \frac{1}{2m}\sum_{l=1}^m (y_l-f(x_l))^2 + \frac{2(Y+C_W) C_W}{\sqrt{m}} + (Y^2 + C_W^2)\sqrt{\frac{\log\frac{1}{\delta}}{2m}}.
\end{align}
\end{lemma}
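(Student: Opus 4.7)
}
The plan is to apply the standard Rademacher-complexity generalization inequality of \cite{BarMen02} to the squared-loss class $\ell \circ \Fcal$, together with a careful bound on $\Rcal_m(\Fcal)$ that exploits the positive homogeneity of ReLU and the $\ell_1$-type constraint $\sum_k \nbr{w_k} \le C_W$.

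First I would establish the basic range bounds. Since $\sigma$ is $1$-Lipschitz with $\sigma(0)=0$ and $\nbr{x}_2 \le 1$, we have $|\sigma(w_k^\top x)| \le \nbr{w_k}_2$, so $|f(x)| \le \sum_k \nbr{w_k}_2 \le C_W$ for every $f \in \Fcal$. Together with $|y|\le Y$ this gives $\tfrac{1}{2}(y-f(x))^2 \le \tfrac{1}{2}(Y+C_W)^2 \le Y^2 + C_W^2$, so the loss is uniformly bounded by $B := Y^2+C_W^2$. This is exactly the quantity needed for the McDiarmid/concentration term in the generalization bound.

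Next I would bound $\Rcal_m(\Fcal)$ itself. Using positive homogeneity of ReLU, write each unit as $v_k \sigma(w_k^\top x) = v_k \nbr{w_k}_2 \sigma(u_k^\top x)$ with $u_k = w_k/\nbr{w_k}_2$, and let $a_k := \nbr{w_k}_2$, so $\sum_k a_k \le C_W$. Hence $\Fcal$ lies in $C_W \cdot \mathrm{conv}\{\pm \sigma(u^\top x) : \nbr{u}_2 \le 1\}$. Because Rademacher complexity is invariant under taking symmetric convex hulls,
\[
  \Rcal_m(\Fcal) \le C_W \,\Rcal_m\!\bigl(\{\sigma(u^\top x) : \nbr{u}_2 \le 1\}\bigr).
\]
Applying the Ledoux--Talagrand contraction inequality (valid because $\sigma$ is $1$-Lipschitz with $\sigma(0)=0$) strips $\sigma$, and the standard bound on the Rademacher complexity of a Euclidean ball of radius $1$ acting on $\nbr{x}_2 \le 1$ gives $\Rcal_m(\{u^\top x:\nbr{u}_2\le 1\}) \le 1/\sqrt m$. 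Thus $\Rcal_m(\Fcal) \le C_W/\sqrt m$.

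Finally I would compose with the squared loss. For $|z|,|z'|\le C_W$ and $|y|\le Y$ the map $z \mapsto \tfrac{1}{2}(y-z)^2$ is $(Y+C_W)$-Lipschitz, so a second application of the contraction principle yields $\Rcal_m(\ell \circ \Fcal) \le (Y+C_W)\,C_W/\sqrt m$. Plugging this together with the range bound $B$ into the standard generalization inequality
\[
  \EE\,\ell(y,f(x)) \le \frac{1}{m}\sum_{l=1}^m \ell(y_l,f(x_l)) + 2\Rcal_m(\ell\circ \Fcal) + B\sqrt{\frac{\log(1/\delta)}{2m}}
\]
which holds uniformly over $f \in \Fcal$ with probability $\ge 1-\delta$, produces exactly the three terms stated in the lemma. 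The only mildly non-routine step is the computation of $\Rcal_m(\Fcal)$; the subtlety is that $\Fcal$ is a sum of $n$ ReLU units rather than a single one, and without the sum-norm constraint the complexity would scale with $n$. Rewriting $\Fcal$ as a scaled convex hull is the key trick that eliminates the $n$-dependence and yields the clean $C_W/\sqrt m$ bound.
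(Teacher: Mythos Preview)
Your proposal is correct and follows essentially the same route as the paper: bound the loss range by $Y^2+C_W^2$, apply the McDiarmid/Rademacher generalization inequality of \cite{BarMen02}, use contraction for the $1$-Lipschitz ReLU to get $\Rcal_m(\Fcal)\le C_W/\sqrt m$, and then contract again with the $(Y+C_W)$-Lipschitz squared loss. The only minor difference is in how you obtain $\Rcal_m(\Fcal)\le C_W/\sqrt m$: the paper argues via the additivity rule $\Rcal(\sum_k F_k)\le \sum_k \Rcal(F_k)$ together with $\sum_k\nbr{w_k}\le C_W$, whereas you rewrite $\Fcal$ as $C_W$ times a symmetric convex hull and use invariance of Rademacher complexity under convex hulls---a slightly cleaner way to handle the joint sum constraint, but the same bound and the same overall argument.
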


\begin{proof}
For a sample $S = ((x_1,y_1), \cdots, (x_m,y_m))$, and a loss function $\ell(y,x) = \frac{1}{2}(y-f(x))^2$, we denote
$\hat{\EE}_S[\ell]$ as the empirical average $\hat{\EE}_S[\ell] = \frac{1}{m}\sum_{l=1}^m \ell(y_l,x_l)$.  

Define
\begin{align}
\Phi(S) = \sup_{\ell \in \Lcal} \EE[\ell] - \hat{\EE}_S[\ell]
\end{align}
where $\Lcal$ is the set of loss functions
$$
 \Lcal = \cbr{\ell(y,x) = \frac{1}{2}(y-f(x))^2 : f \in \Fcal}.
$$

Let $S$ and $S'$ be two datasets that differ by exactly one data point $(x_i, y_i)$ and $(x'_i, y'_i)$.  Then we have a bound on the difference of loss functions. Since $\nbr{x}_2 \le 1$ and $\sum_k\nbr{w_k}_2 \le C_W$, we have $\abr{f} \le C_W$. Thus
\begin{align}
\abr{ \ell(y, f(x)) - \ell(y', f(x')) } & \le \frac{1}{2}\max\cbr{ (y - f(x))^2 , (y'- f(x'))^2} \le Y^2 + C_W^2.
\end{align}

This leads to an upper bound
\begin{align}
\Phi(S) - \Phi(S')  &\le \sup_{\ell \in \Lcal} \hat{\EE}_S[\ell] - \hat{\EE}_{S'}[\ell] \nonumber\\
&= \sup_{\ell \in \Lcal} \frac{\ell(y_i,f(x_i)) - \ell(y'_i,f(x'_i))}{m} \le \frac{Y^2 + C_W^2}{m}.
\end{align}

Similarly, we can get the other side of the inequality and have $\abr{\Phi(S) - \Phi(S')} \le \frac{Y^2 + C_W^2}{m}$.

From McDiamids' inequality, with probability at least $1 -\delta$ we get 
\begin{align}
\Phi(S) \le \EE_{S}\Phi(S) + (Y^2 + C_W^2)\sqrt{\frac{\log\frac{1}{\delta}}{2m}} .
\end{align}

The first term on the right-hand side can be bounded by Rademacher complexity as shown in the book Foundations of Machine Learning (3.13).
In the end, we have the bound
\begin{align}
\frac{1}{2}\EE(y-f(x))^2 \le \frac{1}{2m}\sum_{l=1}^m (y_l-f(x_l))^2 + 2\Rcal_m(\Lcal) + (Y^2 + C_W^2)\sqrt{\frac{\log\frac{1}{\delta}}{2m}}
\end{align}
where $\Rcal_m(\Lcal)$ is the Rademacher complexity of the function class $\Lcal$.

We can find the Rademacher complexity by using composition rules. The Rademacher complexity of linear functions $\cbr{w^\top x : \nbr{w}_2\le b_W, \nbr{x}_2\le1}$ is $b_W / \sqrt{m}$, where $m$ is the number of data points. If a function $\phi$ is $L$-Lipschitz, then for any function class $\Hcal$, we have $\Rcal(\phi \circ \Hcal )\le L\Rcal(\Hcal)$. In addition, we also have $\Rcal(cH) = \abr{c}\Rcal(H)$ and $\Rcal(\sum_k F_k) \le \sum_k\Rcal(F_k)$.

So for the function class $\Fcal$ that describes a neural network, we have
\begin{align}
\Rcal_m(\Fcal) \le \frac{C_W}{\sqrt{m}}.
\end{align}
It is derived by using the fact that $\sigma'(\cdot)$ is 1-Lipschitz and $\sum_k \nbr{w_k}_2 \le C_W$.

Finally composing on the loss function we get
\begin{align}
\Rcal_m(\Lcal) \le  \frac{(Y+C_W) C_W}{\sqrt{m}},
\end{align}
using the fact that the ground truth in the loss should be bounded by $Y$ and the function bounded by $C_W$, thus the Lipschitz constant of the loss function is bounded by $Y+C_W$.
\end{proof}

\end{document}